\newtheorem{theorem}{Theorem}
\newtheorem{remark}{Remark}
\definecolor{mydarkblue}{rgb}{0,0.08,0.45}
\newcommand{\rneg}{\mathord{\reflectbox{$\neg$}}}
\newcommand{\one}{\mathbbm{1}}
\newenvironment{talign*}
 {\csname align*\endcsname}
 {\endalign}
\newenvironment{talign}
 {\csname align\endcsname}
 {\endalign}
\name{Sophia Seulkee Kang}%
\affiliation{Independent Researcher}
\name{Fran\c{c}ois-Xavier Briol}%
\affiliation{University College London}
\name{Toni Karvonen}%
\affiliation{Lappeenranta--Lahti University of Technology LUT}
\name{Zonghao Chen}%
\affiliation{University College London}
\begin{document}
\newcommand{\E}{\mathbb{E}}
\newcommand{\R}{\mathbb{R}}
\newcommand{\Qb}{\mathbb{Q}}
\newcommand{\Pb}{\mathbb{P}}
\newcommand{\calH}{\mathcal{H}}
\newcommand{\calN}{\mathcal{N}}
\newcommand{\calM}{\mathcal{M}}
\newcommand{\calE}{\mathcal{E}}
\newcommand{\calT}{\mathcal{T}}
\newcommand{\calL}{\mathcal{L}}
\newcommand{\calS}{\mathcal{S}}
\newcommand{\calF}{\mathcal{F}}
\newcommand{\calG}{\mathcal{G}}
\newcommand{\calZ}{\mathcal{Z}}
\newcommand{\calB}{\mathcal{B}}
\newcommand{\calY}{\mathcal{Y}}
\newcommand{\calO}{\mathcal{O}}
\newcommand{\calD}{\mathcal{D}}
\newcommand{\calX}{\mathcal{X}}
\newcommand{\calI}{\mathcal{I}}
\newcommand{\calP}{\mathcal{P}}
\newcommand{\dd}{\mathrm{d}}
\newcommand{\bx}{\mathbf{x}}
\newcommand{\bo}{\mathbf{o}}
\newcommand{\by}{\mathbf{y}}
\newcommand{\bz}{\mathbf{z}}
\newcommand{\bH}{\mathbf{H}}
\newcommand{\bw}{\mathbf{w}}
\newcommand{\bJ}{\mathbf{J}}
\newcommand{\bnabla}{\boldsymbol{\nabla}}
\newcommand{\scrG}{\mathscr{G}}
\newcommand{\scrF}{\mathscr{F}}
\newcommand{\scrX}{\mathscr{X}}
\newcommand{\scrx}{\mathscr{X}}
\newcommand{\scrY}{\mathscr{Y}}
\newcommand{\scry}{\mathscr{Y}}
\newcommand{\scrW}{\mathscr{W}}
\newcommand{\scrw}{\mathscr{W}}
\newcommand{\scrZ}{\mathscr{Z}}
\newcommand{\scrz}{\mathscr{Z}}
\newcommand{\scrU}{\mathscr{U}}
\newcommand{\scrR}{\mathscr{R}}
\newcommand{\scrQ}{\mathscr{Q}}
\newcommand{\scrL}{\mathscr{L}}

\newcommand{\Id}{\mathrm{Id}}
\newcommand{\N}{\mathbb{N}}
\newcommand{\Z}{\mathbb{Z}}
\newcommand{\kl}{\mathrm{KL}}
\newcommand{\fisher}{\mathrm{FI}}
\newcommand{\tv}{\mathrm{TV}}
\newcommand{\LSI}{\mathrm{LSI}}

\newcommand{\kq}{\mathrm{KQ}}
\newcommand{\bq}{\mathrm{BQ}}
\newcommand{\mc}{\mathrm{MC}}

\section{Introduction}
We consider the challenge of computing an expectation over a discrete domain, which can typically be expressed as an intractable sum to be estimated. 
More precisely, for a discrete domain $\mathcal{X}$, an integrable function $f: \mathcal{X} \to \mathbb{R}$, and a probability measure $\mathbb{P}$ on $\mathcal{X}$ with probability mass function $p:\mathcal{X}\to\mathbb{R}$, the quantity of interest is: 
\begin{align}\label{eq:I}
    I = \E_{X\sim \Pb}[f(X)] = \sum_{x \in \mathcal{X}} f(x) p(x).
\end{align}
In practice, $I$ is typically intractable either because there are countably many terms (e.g., when $\mathcal{X} = \N$), or because there is a finite but prohibitively large number of terms, such as the set of all configurations in a combinatorial structure such as a lattice. 

The challenge of estimating $I$ arises for statistical and machine learning models of discrete data which \emph{lack} a closed-form normalization constant, rendering maximum likelihood training, posterior inference, and model evaluation computationally intractable. Prominent examples of unnormalized likelihood models include the Ising/Potts model for protein sequences~\citep{balakrishnan2011learning} and statistical physics~\citep{baxter2016exactly}, spatial point processes for ecological and epidemiological data~\citep{moller2003statistical}, multivariate count data~\citep{piancastelli2023multivariate}, and exponential random graph models for social networks~\citep{robins2007recent}. 
In Bayesian inference, the intractable constant of these likelihood models (e.g., a Potts model with large state spaces) along with the intractable posterior normalizing constant leads to the notorious \emph{doubly intractable} posterior distributions~\citep{murray2006mcmc}.

Despite its broad relevance, surprisingly few methods exist in the literature for estimating $I$ (in \eqref{eq:I}) as compared to its counterpart in the continuous domain.
The most straightforward approach is the Monte Carlo (MC) method~\citep{rubinstein2016simulation}, where $N$ i.i.d. samples $x_1, \ldots, x_N$ are drawn from $\Pb$ and the intractable sum is approximated by the empirical average $\hat{I}_{\mc}=\frac{1}{N}\sum_{i=1}^N f(x_i)$. 
Under mild conditions, $\hat{I}_\mc$ is consistent, however, its convergence rate remains relatively slow, as MC does not exploit any structural properties of the integrand $f$ such as smoothness or sparsity.
In the continuous domain, a family of approaches have been proposed that leverage such properties of $f$ and consequently results in faster convergence rate. Prominent examples include quasi Monte Carlo~\citep{dick2011higher}, 
Stein variational gradient descent~\citep{liu2018stein}, 
classical cubature methods~\citep{davis2007methods}, kernel herding and Stein points \citep{Chen2010,Chen2018}
kernel thinning~\citep{dwivedi2024kernel}, stationary points~\citep{chen2025stationary},  and Bayesian quadrature~\citep{briol2019probabilistic}. 
In contrast, analogous developments for the discrete domain are far more limited.

In this paper, we propose a novel approach for discrete domains called \emph{BayesSum}. 
The name comes from the fact that our approach extends the Bayesian quadrature algorithm~\citep{o1991bayes,Diaconis1988,rasmussen2003bayesian,briol2019probabilistic} to the computation of intractable expectations taking the form of sums over a discrete domain $\calX$. 
By leveraging the structural properties of the integrand $f$ through a prior Gaussian process~\citep{williams2006gaussian} over $\calX$, 
BayesSum achieves a fast rate of convergence, i.e., we observe in our experiments that it requires fewer samples than competing methods to reach the same approximation accuracy. 
BayesSum falls within the framework of Bayesian probabilistic numerical methods~\citep{cockayne2019bayesian,hennig2022probabilistic} in that it provides finite-sample Bayesian uncertainty quantification.

In addition to the basic BayesSum algorithm, we introduce the following variants that further improve its applicability and efficiency: 
\begin{enumerate}[itemsep=2.0pt, topsep=2pt, leftmargin=*]
    \item \emph{BayesSum + Bayesian quadrature}. BayesSum can be seamlessly combined with Bayesian quadrature, resulting in a novel algorithm for estimating intractable expectations over mixed domains with both discrete and continuous components. Computing expectations over mixed domains remains largely underexplored in the literature, with only a few baselines available.
    \item \emph{Active BayesSum}. We propose active BayesSum, which selects future query locations by maximizing a utility function, such as expected information gain. This active learning strategy further enhances the sample efficiency of BayesSum by only performing function evaluations at locations most useful for computing the expectation. 
    \item \emph{Stein BayesSum}. Similarly to Bayesian quadrature, a central ingredient of BayesSum is the availability of closed-form kernel mean embeddings. To this end, we provide in \Cref{tab:discrete_kme} a comprehensive dictionary of kernel–distribution pairs that admit such closed-form expressions in the discrete setting. For cases where closed-form embeddings are not available (e.g., where $\Pb$ is known only up to a normalization constant), we propose Stein BayesSum, which leverages discrete Stein kernels~\citep{yang2018goodness} to bypass the need for explicit kernel mean embeddings.
\end{enumerate}
We empirically verify the effectiveness of all of these variants of BayesSum over several synthetic examples where we have access to the ground truth of \eqref{eq:I} which allows extensive benchmarking. We then also test BayesSum in more realistic settings, including to perform parameter estimation for a Conway–Maxwell–Poisson model of count data, and for a large Potts model on a set of sequences resembling biological proteins.

\section{Background and Related Work}\label{sec:related_work}

In this section, we begin by reviewing existing methods for estimating intractable expectations over discrete domains. 
Notably, none of these methods exploit structural information about the integrand $f$. 
Next, we present Bayesian quadrature, a widely used framework for numerical integration in continuous spaces. Finally, we introduce Gaussian processes and reproducing kernels on discrete domains $\mathcal{X}$, which provide the technical foundations for BayesSum.

\subsection{Existing methods}

\textbf{Importance sampling:} Beyond the standard Monte Carlo estimator introduced above, alternative sampling distributions are often helpful.
Importance sampling estimates the expectation in \eqref{eq:I} by rewriting the sum under an easy-to-sample proposal distribution $\Qb$ with mass function $q$ that shares the same support as $\Pb$. Specifically, $I=\sum_{x \in\calX} f(x) \frac{p(x)}{q(x)} q(x) = \mathbb{E}_{X \sim \Qb}[f(X) \frac{p(X)}{q(X)}]$. 
This formulation allows us to approximate $I$ by drawing samples from $\Qb$ and averaging the alternative integrand $f(x) p(x) / q(x)$~\citep{mcbook}. 
The choice of $\Qb$ is critical to the variance of the final estimator.

\textbf{Russian roulette:} 
A specific approach to estimate ~\eqref{eq:I} over a discrete domain without exhaustively summing over all configurations in $\calX$ is to use an unbiased random truncation scheme, often called the \emph{Russian roulette} estimator. The key idea is to evaluate the summand only on a randomly selected subset $\calX'\subset \calX$, and to compensate for the missing terms through appropriately re-weighting so that the final estimator remains unbiased. 
In practice, the Russian roulette estimator is known to suffer from high variance~\citep{lyne2015russian,hendricks2006mcnp}. Although it bears similarity to importance sampling, it differs fundamentally by summing over only a random subset $\calX'$ instead of the entire $\calX$. 

\textbf{Stratified sampling:} 
Another approach to estimating \eqref{eq:I} is stratified sampling, where the domain is partitioned into $M$ disjoint subsets and $\lfloor N/M \rfloor$ samples are drawn from the conditional distribution within each subset, thereby enforcing more uniform coverage of the domain~\citep[Chapter 10]{mcbook}. In continuous settings, quasi–Monte Carlo (QMC) methods can be viewed as a refinement of stratified sampling, using increasingly finer partitions that yield low-discrepancy point sets~\citep{caflisch1998monte,owen2003quasi}. However, QMC methods are not directly applicable to discrete domains, as it is unclear how to construct low-discrepancy point sets in general discrete spaces. 
Another limitation of stratified sampling is that it requires a tractable partition of $\calX$ and efficient sampling from each partition, which becomes challenging in domains of high dimensionality. 

\subsection{Bayesian quadrature} 
The task of estimating $I$ in the continuous setting, i.e., $I=\int f(x) p(x) \mathrm{d} x$, using weighted averages of function evaluations is known as quadrature~\citep{gautschi1981survey}. Quadrature has a long history, and numerous sophisticated methods have been developed~\citep{novak2006deterministic, dick2011higher,bardenet2020monte,sun2023vector,chopin2024higher,chen2025stationary}.
Of particular relevance to our work is \emph{Bayesian quadrature} (BQ)~\citep{o1991bayes,Diaconis1988,rasmussen2003bayesian,briol2019probabilistic}, in which the estimator is obtained by integrating the posterior mean of a Gaussian process (GP). 
It has been shown both theoretically and empirically, in the continuous domain, that BQ estimators attain a faster convergence rate than standard Monte Carlo~\citep{Kanagawa2016,briol2019probabilistic,Kanagawa2019,Karvonen2020,Wynne2020,chennested}; in other words, for a given error tolerance, BQ requires significantly fewer samples.
This improvement in sample efficiency arises from exploiting the smoothness of the integrand $f$ encoded through the GP prior covariance. 
Remarkably, BQ continues to achieve superior performance even when the integrand $f$ is less smooth than samples drawn from the GP prior, i.e., in the misspecified setting~\citep{kanagawa2020convergence}.
BQ has been used, for instance, in estimation of normalization constants~\citep{osborne2012active}, in reinforcement learning~\citep{paul2018alternating} and simulation-based inference~\citep{Bharti2023}. It has also been extended to conditional and nested expectations~\citep{chen2024conditional,chennested} as well as multi-output~\citep{xi2018bayesian} and multi-fidelity settings~\citep{li2023multilevel}.

\subsection{Gaussian Process and Reproducing kernel Hilbert space on discrete domains}\label{sec:gp_rkhs}

Here we briefly recall Gaussian processes and kernels defined on discrete domains, which we will use in BayesSum to encode prior knowledge about the integrand $f$. 

For a positive semi-definite kernel $k: \calX \times \calX \rightarrow \R$, its unique associated reproducing kernel Hilbert space (RKHS) $\calH_k$ is a Hilbert space with inner product $\langle\cdot, \cdot\rangle_{\calH_k}$ and norm $\lVert\cdot\rVert_{\calH_k}$~\citep{aronszajn1950theory} such that: (i) $k(x, \cdot) \in \calH_k$ for all $x \in \calX$, and (ii) the reproducing property holds, i.e. for all $h \in \calH_k, x \in \calX$, $h(x)=\langle h, k(x, \cdot)\rangle_{\calH_k}$.

Although kernels defined on continuous domains, e.g., Gaussian RBF kernels, remain valid (i.e., symmetric and positive semi-definite) when restricted to discrete domains~\citep[Theorem 4.16]{steinwart2008support}, it is unclear whether the corresponding RKHS retain the same structural (e.g smoothness) properties as in the continuous case. 
For example, on an open subset $\mathcal{X}\subseteq \mathbb{R}^d$, the RKHS of a Matérn kernel on $\mathcal{X}\subseteq \mathbb{R}^d$ coincides with a Sobolev space, consisting of functions with square-integrable derivatives up to a certain order~\citep[Corollary 10.48]{wendland2004scattered}. 
However, when the domain $\mathcal{X}$ is discrete, it is unclear whether such smoothness remain true. 
An exception is the Brownian motion kernel $k(x,y) = \text{min}(x,y)$ which gives the minimum of the two arguments. 
On the discrete domain $\mathcal{X}=\N$, its RKHS $\mathcal{H}_k$ consists of functions $f:\N\to\mathbb{R}$ satisfying $\sum_{n\in\N} |f(n+1)-f(n)|^2 < \infty$~\citep{jorgensen2015discrete}, similar to its continuous analogue. 

Fortunately, there exist several reproducing kernels which are  explicitly designed for discrete domains. 
The most widely used among them is the
\emph{Hamming distance kernel}: for $\calX = \{0,1\}^L$, the kernel is $k(x,y) = d_H(x,y)$ where $d_H(x,y) =\sum_{i=1}^L 1\{x_i \neq y_i\}$ represents the Hamming distance between two sequences of length $L$. 
This kernel encodes the prior knowledge that sequences are more similar if they differ in fewer positions, making it useful for biological sequences. 
Another alternative is the \emph{exponential Hamming kernel} proposed in \citet{amin2023biological}  where
$k(x,y) = A \exp(-\lambda d_H(x,y))$ with $A$ and $\lambda$ being the kernel amplitude and lengthscale. 
This kernel encodes similar prior knowledge as above and it achieves better empirical performances in kernel ridge regression~\citep{amin2023biological}, making it the default kernel in \texttt{botorch} for categorical inputs~\citep{balandat2020botorch}. 
There also exist other kernels defined over discrete domains which are tailored towards specific applications; see \citet{gartner2008kernels} and \citet{jorgensen2015discrete} for surveys.

Apart from RKHSs, another popular framework built upon positive semi-definite kernels is Gaussian processes (GPs)~\citep{williams2006gaussian}, denoted as $\mathcal{GP}(m, k)$, which is a stochastic process over functions $f: \mathcal{X} \rightarrow \mathbb{R}$ such that every finite collection $[f(x_1), \ldots, f(x_n)]^\top$ follows a multivariate normal distribution with mean $[m(x_1), \ldots, m(x_n))]^\top$ and covariance $[k(x_i, x_j)]_{i, j} \in \R^{n\times n}$. 
One particular application of Gaussian processes over discrete domains $\calX$~\citep{fortuin2021sparse} is their use as probabilistic surrogate models in Bayesian optimization when the black-box function to be optimized is defined on the discrete domains~\citep{balandat2020botorch,ru2020bayesian,daulton2022bayesian}.

\section{Methodology}\label{sec:dbq}
\vspace{-10pt}
We are now ready to introduce  \emph{BayesSum}, a Bayesian estimator for the intractable expectation in Eq.~\eqref{eq:I}, 
with $N$ samples $x_{1:N} := [x_1, \ldots, x_N]^\top$ and corresponding function evaluations $f(x_{1:N}) := [f(x_1), \ldots, f(x_N)]^\top$.  
\subsection{BayesSum}

We begin by positing a GP prior on $f$. We denote this prior $\mathcal{GP}(m, k)$, with $m: \calX \rightarrow \mathbb{R}$ a mean function and $k: \calX \times \calX \rightarrow \mathbb{R}$ a positive semi-definite kernel function as reviewed above. 
Without loss of generality, here we take $m \equiv 0$ as any nonzero mean can be absorbed by centering the observations (i.e., subtracting the empirical mean). The kernel function 
$k$ encodes prior knowledge about the notion of similarity or distance on 
$\calX$ (e.g., the Hamming distance for biological strings).

Once a GP prior has been selected and conditioned on \emph{noiseless} function values $f(x_{1: N})=[f(x_1), \cdots, f(x_N)]^{\top}$, we obtain a posterior GP on $f$ which induces a univariate Gaussian posterior distribution $\mathcal{N}(\hat{I}, \hat{\sigma}^2)$ on $I$ with:
\begin{align}\label{eq:BQ}
\hat{I} & := \mu_\Pb(x_{1: N})^{\top} \mathbf{K}^{-1} f(x_{1: N}), \tag{BayesSum} \\
\hat{\sigma}^2 & := \E_{X, X^{\prime} \sim \mathbb{P}}[k(X, X^{\prime})] -\mu_\Pb(x_{1: N})^{\top} \mathbf{K}^{-1} \mu_\Pb(x_{1: N}). \nonumber 
\end{align}
Here, $\mu_\mathbb{P}(x) = \E_{X\sim\Pb}[k(X, x)]$ is the kernel mean embedding, $\E_{X, X^{\prime} \sim \mathbb{P}}[k(X, X^{\prime})]$ is often called the initial error, and $\mathbf{K} = k(x_{1:N}, x_{1:N}) \in\R^{N\times N}$ is the associated Gram matrix with entries $\mathbf{K}_{i,j} = k(x_i, x_j)$. 
Beyond the posterior mean, BayesSum also provides a posterior variance only at a marginal additional cost, which quantifies the epistemic uncertainty due to the reliance on a finite set of function evaluations---a perspective rooted in the framework of probabilistic numerics~\citep{hennig2022probabilistic}. 
We refer to the proposed estimator as \emph{BayesSum}\footnote{Despite the similar terminology, this shall not be confused with kernel Bayes rule~\citep{fukumizu2011kernel} or kernel sum rule~\citep{nishiyama2020model}.}. 

\begin{remark} \label{rmk:weights}
    The BayesSum posterior mean and variance mirror that of Bayesian quadrature \emph{exactly}. The only distinction is that our domain $\calX$ is discrete rather than continuous. 
    For this reason, referring to the method as quadrature would be misleading, since quadrature concerns numerical integration over continuous domains. 
    Given this equivalence, the BayesSum posterior mean can be written as a weighted sum of function evaluations $\sum_{i=1}^N w_i f(x_i)$, where the weights $w_{1:N} = \mu_\Pb(x_{1: N})^{\top} \mathbf{K}^{-1}$ are \emph{optimal} in the associated RKHS~\citep{huszar2012optimally,briol2019probabilistic}.
\end{remark}

\textbf{Computational cost:} 
The computational cost of our BayesSum with $N$ samples is $\calO(N^3)$, which is much larger than the $\calO(N)$ cost of Monte Carlo. Numerous approaches have been proposed in the literature to reduce the cost of Bayesian quadrature in the continuous domain: for instance, one can use geometric properties of the point set~\citep{karvonen2018fully, karvonen2019symmetry,kuo2025constructing}, Nyström approximations~\citep{hayakawa2022positively,hayakawa2023sampling}, or randomly pivoted Cholesky~\citep{epperly2023kernel}.
Among these methods, Nyström approximations and randomly pivoted Cholesky can also be applied to the discrete domain. 
Moreover, since the BayesSum weights do not depend on the integrand $f$, they can be precomputed once and reused, reducing the computational cost to 
$\calO(N)$~\citep[Appendix F.1]{chennested}. 
See \Cref{sec:experiment} for details on how this is implemented in our experiments.
Furthermore, in scenarios where the dominant cost arises from function evaluations (e.g., expensive computer simulations) or from obtaining samples (e.g., Markov chain Monte Carlo), the superior sample efficiency of BayesSum allows it to achieve accurate estimates with much fewer evaluations than MC, thereby reducing the overall computational cost in practice. 

\textbf{Hyperparameter selection:}
The choice of kernel hyperparameters, such as amplitude $A$ and lengthscale $\lambda$ for the exponential Hamming distance kernel $k(x,y)=A\exp(-\lambda d_H(x,y))$, is crucial for both the accuracy of the BayesSum estimator and the calibration of its posterior variance.
In our experiments, these are selected via  empirical Bayes, which involves choosing the values that maximize the log marginal likelihood~\citep{williams2006gaussian} from a predefined candidate set, e.g., $\{1.0, 10.0, 100.0, 1000.0\}$ for the amplitude $A$ and $\{0.1, 0.3, 1.0, 3.0, 10.0\}$ for the lengthscale $\lambda$.  

\textbf{Convergence guarantees:}
The following theorem shows that BayesSum estimator $\hat{I}$ converges to the ground truth $I$ if $f \in \mathcal{H}_k$. 
To optimize the rate of convergence one should use samples at which most of the probability mass is concentrated.

\begin{theorem}\label{thm:main}
    Suppose that $k(x, y) \leq C^2$ for all $x, y \in \mathcal{X}$. 
    Suppose that the $N$ samples $\{x_i\}_{i=1}^N$ are non-repetitive. 
    Without loss of generality, let $\mathcal{X}$ admit an enumeration $\mathcal{X} = \{x_i\}_{i=1}^\infty$ where the first $N$ samples $\{x_i\}_{i=1}^N$ correspond to the observed samples. 
    If $f \in \mathcal{H}_k$, then
    \begin{talign*}
        \lvert I - \hat{I} \rvert \leq C \lVert f \rVert_{\mathcal{H}_k} \big( 1 - \sum_{i=1}^N p(x_i) \big) \to 0 \: \text{ as } \: N \to \infty.
    \end{talign*}
\end{theorem}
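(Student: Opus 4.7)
The plan is to exploit the RKHS-optimality of the BayesSum weights, as noted in \Cref{rmk:weights}, and then bound the worst-case error by plugging in a convenient suboptimal choice of weights, namely $w_i = p(x_i)$.

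First, I would write $\hat{I} = \sum_{i=1}^N w_i f(x_i)$ with $w_{1:N} = \mu_\Pb(x_{1:N})^\top \mathbf{K}^{-1}$, and use the reproducing property together with $\mu_\Pb = \sum_{i=1}^\infty p(x_i) k(x_i, \cdot)$ to express the integration error as an RKHS inner product:
\begin{talign*}
I - \hat{I} = \langle f, \mu_\Pb - \textstyle\sum_{i=1}^N w_i k(x_i, \cdot) \rangle_{\mathcal{H}_k}.
\end{talign*}
Cauchy--Schwarz then yields $|I - \hat{I}| \leq \|f\|_{\mathcal{H}_k} \cdot \|\mu_\Pb - \sum_{i=1}^N w_i k(x_i, \cdot)\|_{\mathcal{H}_k}$, and since the BayesSum weights $w_{1:N}$ minimize this RKHS norm over all choices of coefficients (standard fact from BQ, equivalent to orthogonal projection onto $\mathrm{span}\{k(x_i, \cdot) : i \leq N\}$), I may upper bound this norm by its value at the suboptimal choice $w_i = p(x_i)$.

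Second, with this substitution, the remainder becomes a tail sum: $\mu_\Pb - \sum_{i=1}^N p(x_i) k(x_i, \cdot) = \sum_{i=N+1}^\infty p(x_i) k(x_i, \cdot)$. Computing its squared RKHS norm gives
\begin{talign*}
\Big\| \textstyle\sum_{i=N+1}^\infty p(x_i) k(x_i, \cdot) \Big\|_{\mathcal{H}_k}^2 = \textstyle\sum_{i, j > N} p(x_i) p(x_j) k(x_i, x_j),
\end{talign*}
and using the uniform kernel bound $|k(x_i, x_j)| \leq C^2$ (which follows from $k(x,x) \leq C^2$ and the reproducing-kernel Cauchy--Schwarz $|k(x,y)| \leq \sqrt{k(x,x) k(y,y)}$), this is at most $C^2 (1 - \sum_{i=1}^N p(x_i))^2$. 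Taking square roots and combining with the first step gives the claimed inequality. Convergence to zero then follows since $\sum_{i=1}^\infty p(x_i) = 1$, where the enumeration assumption ensures we exhaust the support of $\Pb$.

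The main technical subtlety I would have to address is justifying the identity $\mu_\Pb = \sum_{i=1}^\infty p(x_i) k(x_i, \cdot)$ as an equality in $\mathcal{H}_k$ (not merely pointwise), since $\mathcal{X}$ may be countably infinite. This amounts to showing the partial sums form a Cauchy sequence in $\mathcal{H}_k$, which follows from the same tail-norm computation: $\|\sum_{i=M+1}^{M'} p(x_i) k(x_i, \cdot)\|_{\mathcal{H}_k}^2 \leq C^2 (\sum_{i > M} p(x_i))^2 \to 0$ as $M \to \infty$. The limit must then coincide with $\mu_\Pb$ pointwise (by the reproducing property applied at any fixed $y$) and hence as elements of $\mathcal{H}_k$. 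Everything else in the argument is a clean assembly of standard RKHS identities.
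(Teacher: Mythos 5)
Your proposal is correct and follows essentially the same route as the paper: it invokes the worst-case optimality of the BayesSum weights, substitutes the suboptimal weights $w_i = p(x_i)$, reduces the resulting worst-case error to the tail sum $\sum_{i,j>N} p(x_i)p(x_j)k(x_i,x_j)$, and bounds it via $k \leq C^2$. Your extra care in justifying that $\mu_\Pb = \sum_{i=1}^\infty p(x_i)k(x_i,\cdot)$ converges in $\mathcal{H}_k$ (not just pointwise) is a welcome refinement that the paper leaves implicit, but it does not change the argument.
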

\begin{proof}
BayesSum is a special case of Bayesian quadrature, which is worst-case optimal in $\mathcal{H}_k$.
From this it follows that, for any weights $w_1, \ldots, w_N \in \mathbb{R}$, 
\begin{talign} \label{eq:bq-bound-w}
    \hat{\sigma}^2 \leq{}& \E_{X, X^{\prime} \sim \mathbb{P}}[k(X, X^{\prime})] - 2 \sum_{i=1}^N w_i \mu_{\mathbb{P}}(x_i) \nonumber \\
    &+ \sum_{i,j=1}^N w_i w_j k(x_i, x_j).
\end{talign}
We refer to Section~2 in \cite{briol2019probabilistic} for this result.
Select $w_i = p(x_i)$.
Then it is easy to compute that the right-hand side in~\eqref{eq:bq-bound-w} equals $\sum_{i,j > N}k(x_i, x_j) p(x_i) p(x_j)$.
Equation~(5) in~\cite{briol2019probabilistic} gives $\lvert I - \hat{I} \rvert \leq \lVert f \rVert_{\mathcal{H}_k} \hat{\sigma}$.
The claim then follows from $k(x, y) \leq C^2$ and the identity $\sum_{i,j > N} p(x_i) p(x_j) = (1 - \sum_{i=1}^N p(x_i))^2$.
\end{proof}
Note that the above proof also holds for finite $\calX$.
The example in \Cref{sec:example} shows that the upper bound in Theorem~\ref{thm:main} is relatively tight. It also shows that $\lvert I - \hat{I} \rvert = \calO(N^{-\alpha + 1})$ for a function $f$ of smoothness $\alpha$, which improves upon the standard Monte Carlo rate $\mathcal{O}(N^{-1/2})$ whenever $\alpha > 3/2$. 

\begin{remark}[Non-repetitive points]\label{rem:iid_repeat}
We would like to highlight that our theory requires non-repetitive  samples. This is a distinctive feature in the discrete domain, since in continuous domains the probability of drawing identical samples is zero. 
We observe empirically that repetitive sampling leads to notably
larger errors in the experiments (see \Cref{fig:iid_repetitive}).  We attribute this degradation in performance to two factors: (i) the resulting Gram matrix becomes singular, and (ii) conditioning the GP prior on repetitive samples leaves the BayesSum posterior unchanged, providing no additional information. 
\end{remark}
\vspace{-10pt}

\begin{table*}[ht]
\centering
\renewcommand{\arraystretch}{1.25}
\small
\begin{adjustbox}{max width=\textwidth} 
\begin{threeparttable}
\caption{Closed-form kernel mean embeddings and initial errors. 
The exact formulas of initial error can be found in \Cref{sec:derivation}. 
$\vartheta$ denotes Jacobi theta function, $Q$ denotes regularized Gamma function~\citep{silverman1972special}, $I_p$ denotes regularized incomplete Beta function~\citep{silverman1972special}, $T_n$ denotes Touchard polynomial function~\citep{touchard1939cycles}, $M_n$ denotes the generalized Touchard polynomial function, and $B_n$ denotes complete exponential Bell polynomial function~\citep{bell1934exponential}.
We omit kernel amplitude $A$, as it merely rescales the kernel mean embeddings. Detailed derivations are included in \Cref{sec:derivation}. 
} 
\label{tab:discrete_kme}
\vspace{-15pt}
\begin{tabularx}{\textwidth}{@{} >{\centering\arraybackslash}l
>{\centering\arraybackslash}l
>{\centering\arraybackslash}l
>{\centering\arraybackslash}l
>{\centering\arraybackslash}X @{}}
\toprule
$\calX$ & $\Pb$ & Kernel & Kernel mean embedding $\mu_\Pb(y)$ & Initial error \\
\midrule 
$\N_0$ & Poisson($\eta$) & $k(x,y)=x\wedge y$
& $\sum_{n=0}^{y} n \,\frac{e^{-\eta}\eta^n}{n!} + y\!\left(1 - Q(y{+}1,\eta)\right)$ & \ding{51} \\ \addlinespace

$\N_0$ & NB($\tau,q$) & $k(x,y)=x\wedge y$
& $y - \sum_{n=0}^{y-1} I_q(\tau,n{+}1)$ & \ding{51} \\ \addlinespace

$\N_+$ & Log($p$) & $k(x,y)=x\wedge y$
& $y + \frac{1}{\ln(1-p)} \sum_{n=1}^{y-1}\frac{(y-n)p^n}{n}$ & \ding{55} \\ \addlinespace

$\N_0$ & Poisson($\eta$) & $k(x,y)=(xy + 1)^r$
& $\sum_{n=0}^r \binom{r}{n} y^n T_n(\eta)$ & \ding{51} \\ \addlinespace


$\N_0$ & NB($\tau,q$) & $k(x,y)=(xy + 1)^r$
& $\sum_{n=0}^r \binom{r}{n} y^n M_n(r,q)$ & \ding{51} \\ \addlinespace

$\Z$ & Skellam($\lambda_1,\lambda_2$) & $k(x,y) = (xy+1)^r$
& $\sum_{n=0}^r \binom{r}{n} y^n B_n(\lambda_1 - \lambda_2, \ldots, \lambda_1 + (-1)^n \lambda_2)$ & \ding{51} \\ \addlinespace

$\{-1,+1\}^d$ & Uniform & $k(x,y)=\exp(-\lambda d_H(x,y))$ 
& $ \exp(-0.5 \lambda d)
\left[\cosh\!\left(\tfrac{\lambda}{2}\right)\right]^d$ & \ding{51} \\ \addlinespace

$\{0, \ldots, m\}^d$ & Uniform & $k(x,y)=\exp(-\lambda d_H(x,y))$ 
& $ \Big(\frac{1 + m e^{-\lambda}}{m+1}\Big)^d$ & \ding{51} \\ \addlinespace

$\{0,1\}^d$ & Uniform & $k(x, y) = \frac{x^\top y}{\|x\|^2+\|y\|^2-x^\top y}$ & 
Equation \eqref{eq:kme_tanimoto}
& \ding{51} \\
\bottomrule
\end{tabularx}
\end{threeparttable}
\end{adjustbox}
\vspace{-10pt}
\end{table*}

\subsection{Estimate intractable expectations over mixed domains}\label{sec:mixed_bq}
BayesSum can be seamlessly combined with Bayesian quadrature (BQ), resulting in a novel estimator for intractable expectations over \emph{mixed domains}, i.e., domains that involve both continuous and discrete parameters. 
Consider a function $f:\calX\times\calY\to\R$ which takes in two inputs: $x\in\calX$ is a discrete input (e.g. natural numbers) and $y\in\calY$ is a continuous input (e.g. real vectors). Let $\Pb$ be a probability measure over this mixed domain and the target of interest in this setting is an intractable expectation over the mixed domain: 
\begin{align}\label{eq:I_mixed}
    I = \E_{(X,Y)\sim\Pb}[f(X,Y)] . 
\end{align}
Following the same framework as BayesSum, we place a zero-mean Gaussian process prior $\mathcal{GP}(0, k)$ on $f$ with a kernel function $k:(\calX\times\calY) \times (\calX\times\calY)\to\R$. 
One valid option to construct a kernel $k$ over the mixed domain is to take the product of two kernels $k_\calX: \calX\times\calX\to\R$ and $k_\calY: \calY\times\calY\to\R$. 
Conditioned on $N$ noiseless evaluations
$f(x_{1:N}, y_{1:N}) = [f(x_1, y_1), \ldots, f(x_N, y_N)]^{\top}$, 
the GP posterior induces a univariate Gaussian posterior on $I$ with the following mean and variance: 
\begin{align}\label{eq:BQ_mixed}
\hat{I} & := \mu_\Pb(x_{1: N}, y_{1:N})^{\top} \mathbf{K}^{-1} f(x_{1: N}, y_{1:N}), \\
\hat{\sigma}^2 & := \E_{(X,Y), (X^{\prime}, Y^\prime) \sim \Pb}[k_\calX(X, X^{\prime})k_\calY(Y, Y^{\prime})] \nonumber \\
&\quad - \mu_\Pb(x_{1: N}, y_{1:N})^{\top} \mathbf{K}^{-1} \mu_\Pb(x_{1: N}, y_{1:N}).
\end{align}
Here, $\mu_\Pb(x_{1: N}, y_{1:N}) = \E_{\Pb}[k_\calX(X, x_{1: N}) k_\calY(Y, y_{1:N}))]$ is the kernel mean embedding and $\mathbf{K} = k_\calX(x_{1: N}, x_{1: N}) \odot k_\calY(y_{1: N}, y_{1: N})$ where $\odot$ denotes the element-wise product between matrices. 
Since the kernel mean embedding $\mu_\Pb(x,y) $ can be factorized as either $\mu_{\Pb_{Y\mid X=x}}(y) \mu_{\Pb_X}(x) $ or $ \mu_{\Pb_{X\mid Y=y}}(x) \mu_{\Pb_Y}(y)$, it admits a closed-form expression when either is available.

\subsection{Closed-form kernel mean embeddings}\label{sec:tractable_kme}
Despite the claimed advantages, a key limitation of BayesSum is the need for closed-form kernel mean embeddings $\mu_\Pb$ and initial error. 
While a dictionary of such embeddings and initial error is available in the continuous domains~\citep{briol2025dictionary}, we now provide a complementary (non-exhaustive) dictionary for pairs of kernels and distributions over the discrete domain in \Cref{tab:discrete_kme}. 
Beyond the distributions listed in \Cref{tab:discrete_kme}, closed-form embeddings and initial error can also be derived when $\Pb$ can be expressed as the push-forward of a distribution using the standard `change-of-variables' technique~\citep{chennested}. 
In addition, one can also construct kernels with closed-form embeddings for a wider class of distribution $\Pb$—including those that are known up to its normalizing constant—via the \emph{Stein reproducing kernel}~\citep{yang2018goodness}. We call BayesSum with Stein reproducing kernels \emph{Stein BayesSum}. 

Let $\calX$ be a finite discrete domain.
Given a base kernel $k:\calX \times\calX\to \R$, the \emph{Stein} reproducing kernel associated with the distribution $p$ can be constructed via a discrete Stein operator~\citep{shi2022gradient}. 
We focus on a particular discrete Stein kernel based on the difference Stein operators~\citep{yang2018goodness}: 
\begin{align}\label{eq:stein_k}
k_p(x, x^{\prime}) &= s_p(x)^{\top} k(x, x^{\prime}) s_p(x^{\prime}) - s_p(x)^{\top} \Delta_{x^{\prime}}^* k(x, x^{\prime}) \nonumber \\
&\hspace{-1cm} -\Delta_{x}^* k(x, x^{\prime})^{\top} s_p(x^{\prime}) + \operatorname{trace}\left(\Delta_{x, x^{\prime}}^* k(x, x^{\prime})\right).
\end{align}
Here, $\Delta$ and $\Delta^\ast$ are partial difference operators defined with respect to a cyclic permutation and an inverse cyclic permutation, and $s_p(x) = \frac{\Delta_{x} p(x)}{p(x)}$ is the \emph{difference score} function associated with the distribution $p$. 
Notably, $s_p$ is accessible even when $p$ is only known up to its normalization constant. 
See \Cref{sec:details} for detailed definitions.
The Stein kernel $k_p$ is a positive semi-definite reproducing kernel from $\calX \times\calX$ to $\R$.

A nice property of the Stein kernel by its construction is that the kernel mean embedding $\mu_\Pb(x) = 0$ for any $x\in\calX$. However, this means our GP prior on $f$ encodes beliefs that the intractable sum in \eqref{eq:I} has mean zero when the GP prior mean $m\equiv 0$. 
To weaken this, we place a hierarchical flat prior on the GP prior mean~\citep{karvonen2018bayes}, which 
results in the following estimator, referred to as \emph{Stein BayesSum}. 
\begin{align}
\hat{I} & := (\one^\top \mathbf{K}^{-1} \one)^{-1} \one^\top \mathbf{K}^{-1} f(x_{1: N}), \tag{Stein BayesSum} \\
\hat{\sigma}^2 & := (\one^\top \mathbf{K}^{-1} \one)^{-1} . \nonumber 
\end{align}
Here, $\mathbf{K} = k_p(x_{1:N}, x_{1:N}) \in\R^{N\times N}$ is the associated Gram matrix of the Stein kernel and $\one \in\R^N$ is a vector with all entries taking value one. 
A similar form of the estimator for numerical integration in the continuous domain has been adopted in \citet[Section 2.3]{karvonen2018bayes} and \citet[Lemma 3]{oates2017control}. 
Previous Bayesian quadrature methods based on Stein kernels, e.g. ~\citet[Appendix B.2]{chen2024conditional}, introduce an additional additive constant $c$ into the kernel. This constant is treated as a kernel hyperparameter and optimized by maximizing the log-marginal likelihood, which results in a higher computational cost compared to our estimator. 

\begin{figure*}[t]
    \begin{minipage}{0.99\textwidth}
    \centering
    \includegraphics[width=360pt]{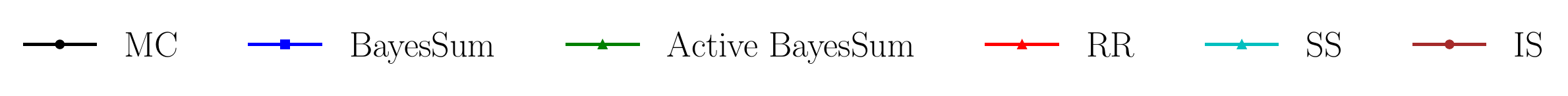}
    \vspace{-8pt}
    \end{minipage}
    \centering
    \vspace{-5pt}
    \begin{subfigure}{0.33\textwidth}
        \centering
        \hspace{-10pt}
        \includegraphics[width=1.0\linewidth]{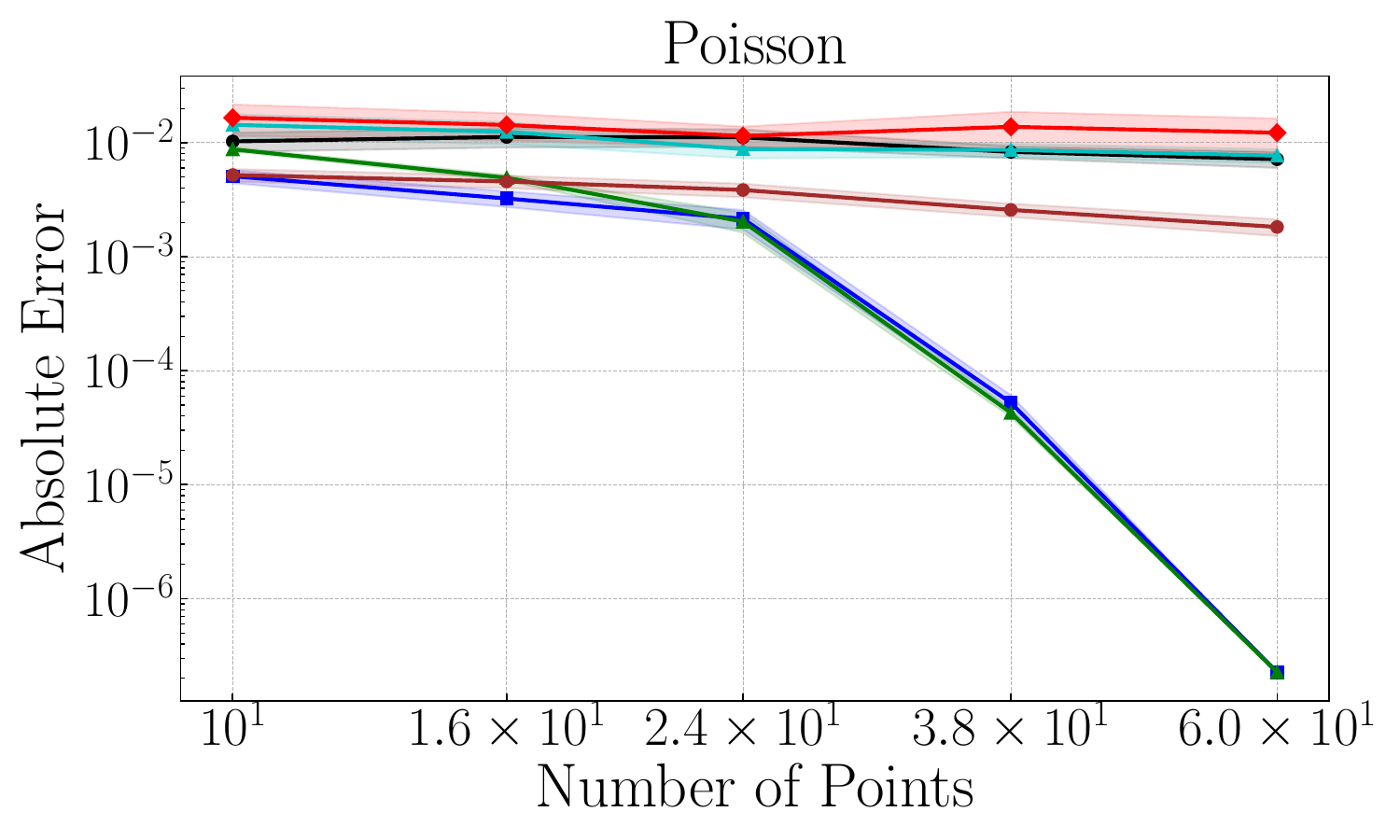}
        \label{fig:poisson}
    \end{subfigure}%
    \hfill 
    \begin{subfigure}{0.33\textwidth}
        \centering
        \hspace{-10pt}
        \includegraphics[width=1.0\linewidth]{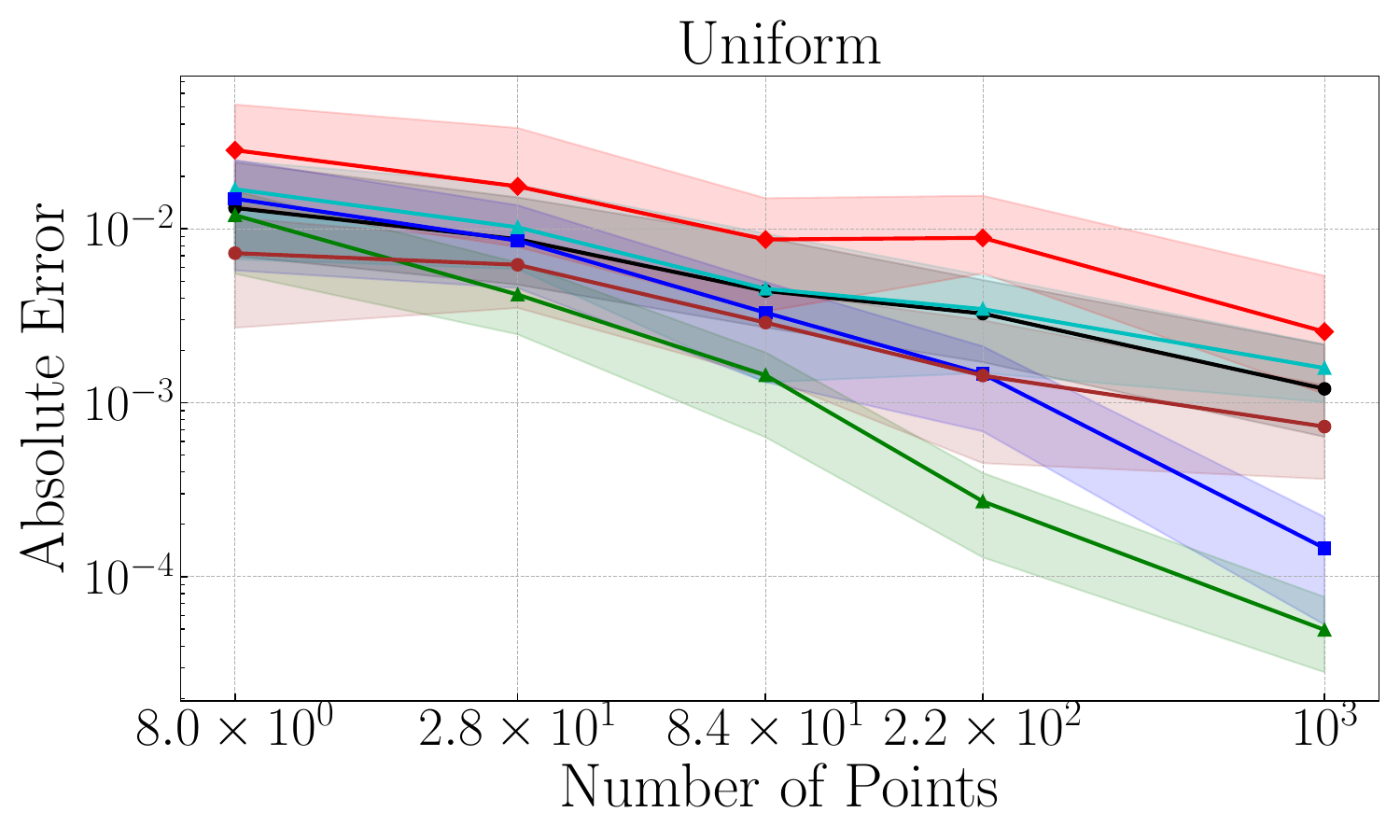}
        \label{fig:ising}
    \end{subfigure}%
    \hfill 
    \begin{subfigure}{0.33\textwidth}
        \centering
        \hspace{-10pt}
        \includegraphics[width=1.0\linewidth]{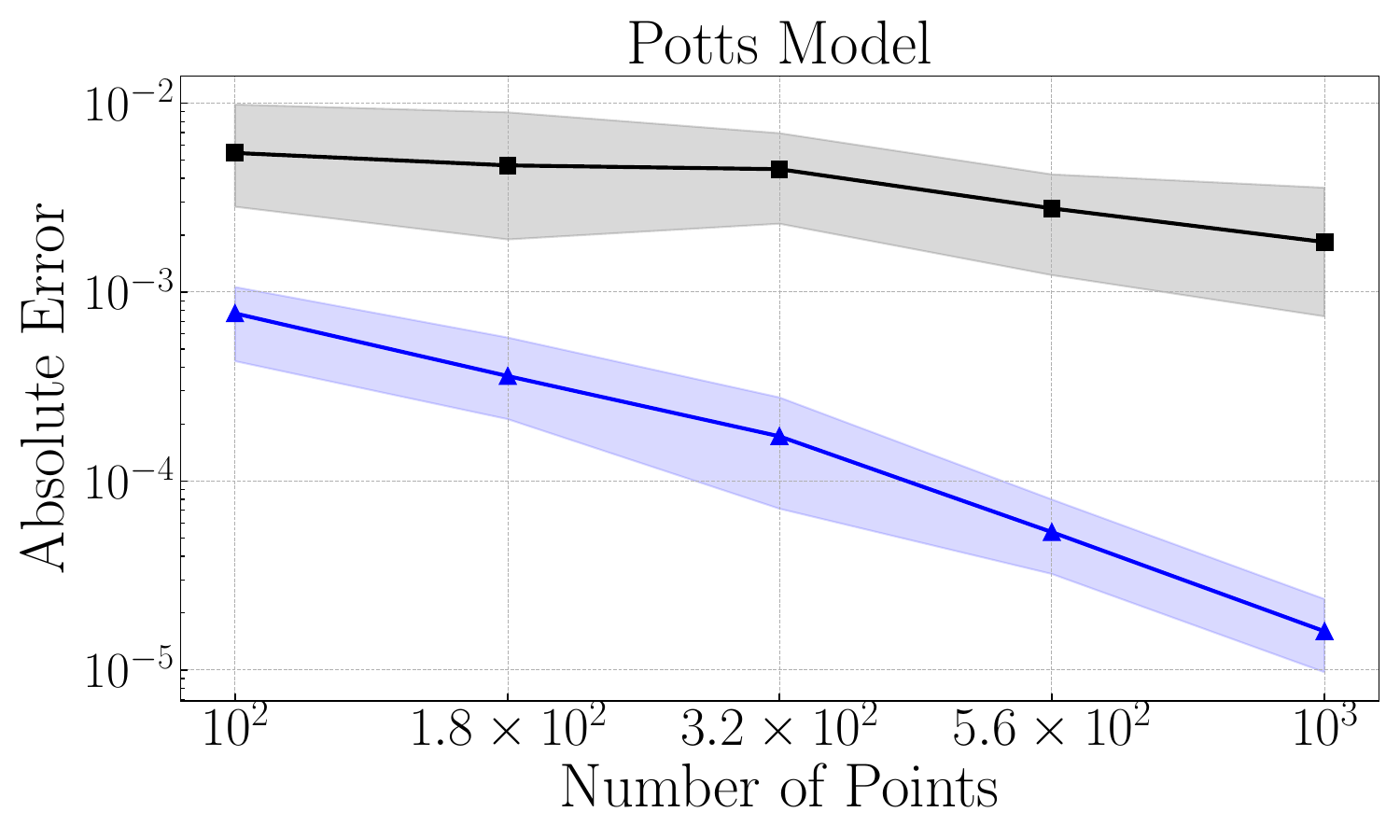}
        \label{fig:potts}
    \end{subfigure}
    \vspace{-3pt}
    \caption{Comparison of BayesSum / active BayesSum against baselines: Monte Carlo (MC), Russian roulette (RR), importance sampling (IS) and stratified sampling (SS). The reported results are absolute error on (a) Poisson distribution (\textbf{Left}), (b) uniform distribution over $\calX=\{0,1,2\}^L$ (\textbf{Middle}) and (c) un-normalized distribution characterized by a Potts model (\textbf{Right}). Results are averaged over 50 independent runs, while the shaded regions give the 25\%-75\% quantiles. }
    \label{fig:toy}
    \vspace{-15pt}
\end{figure*}

\subsection{Active BayesSum}\label{sec:active_dbq}
Rather than relying on a fixed set of samples, active learning can provide a principled framework for \emph{sequential} sample selection, where future query locations are chosen to maximize a user-specified \emph{utility} function. 
A natural choice for this utility is the expected information gain, which has been shown to substantially improve sample efficiency over naive randomized sampling in Bayesian quadrature~\citep{gunter2014sampling,osborne2012active,gessner2020active}. 
In our setting, the same utility can in principle be used to guide sample selection. 
The primary challenge, however, lies in solving the resulting optimization problem over the discrete domain. 

Let $\calD := \{x_i, f(x_i)\}_{i=1}^N$ denote the set of observed samples. Under a GP prior with mean $m \equiv 0$ and covariance kernel $k:\calX\times\calX\to\R$, the posterior GP conditioned on $\calD$ has some mean $\tilde{m}:\calX\to\R$ and covariance $\tilde{k}(x, x') := k(x,x') - k(x,x_{1:N})k(x_{1:N},x_{1:N})^{-1}k(x_{1:N}, x')$. Recall from \Cref{sec:dbq} that the BayesSum estimator induces a univariate Gaussian posterior distribution $\calN(\hat{I}, \hat{\sigma}^2)$. 
As a result, the mutual information between $I$ and a new observation $f(x^\ast)$ admits the following closed-form expression~\citep{gessner2020active}: 
\begin{align*}
    \alpha(x^\ast) = 
    -\log\bigg( 1 - \frac{( \mu_\Pb(x^\ast) - \E_{X\sim\Pb}[\tilde{k}(X, x^\ast)] )^2}{\hat{\sigma}^2 \tilde{k}(x^\ast, x^\ast)} \bigg) . 
\end{align*}

The main challenge is to identify the optimal next query $ \arg\max_{x\in\calX} \alpha(x^\ast)$ over the discrete domain $\calX$, where standard gradient-based optimization are not applicable and exhaustive evaluation of the utility across the entire $\calX$ is expensive. 
In our experiments, we select the next query by maximizing $\alpha(x^*)$ over a randomly chosen small subset $\mathcal{X}_{\text {sub }} \subset \mathcal{X}$ through exhaustive evaluation. 
Although straightforward, this approach may miss the globally optimal next query point. Another widely used alternative is to maximize the expected utility under an auxiliary distribution on $\mathcal{X}$, parameterized by continuous variables $\vartheta$~\citep{daulton2022bayesian,swersky2020amortized}.

\section{Experiments}\label{sec:experiment}
\vspace{-10pt}
We now illustrate BayesSum across a range of applications, including several synthetic examples, a Conway–Maxwell–Poisson model trained on count data, and a large Potts model trained on a set of sequences resembling biological proteins. 
The code to reproduce all our experiments are available at \url{https://github.com/seulkang0518/BayesSum}.

\subsection{Synthetic data}
First, we consider toy synthetic settings with the following pairs of function $f$ and distribution $\Pb$: 
\begin{enumerate}[label=(\alph*), itemsep=2.0pt, topsep=2pt, leftmargin=*]
    \item $\Pb$ is a Poisson distribution over $\calX=\N_+$ with rate parameter $\eta = 30$ and $f(x) = \exp(-(x - 15)^2 / 8)$.
    \item $\Pb$ is a uniform distribution over the state space $\calX=\{0, 1, 2\}^L$ and $f(x) = \exp(-\beta \sum_{i=1}^L h_i x_i - \beta \sum_{i\sim j} J_{ij} (x_i \ast x_j))$. Here, $x_i \ast x_j$ denotes an element-wise product between their one-hot encoded vectors and $i \sim j$ denotes adjacent positions on a $L\times L$ grid. The dimensionality of the original configuration space is $d = L$, and it becomes $d = 3L$ after one-hot encoding. 
    We set $h_i = 0.1 $, $J_{ij}= 0.1$ and $\beta = 1/2.269$ following \citet{moores2020scalable}. 
    \item $\Pb$ is a distribution specified by a Potts model over the state space $\calX=\{0, 1, 2\}^L$ where the probability mass function, known up to its normalization constant, is $p(x)\propto \exp(-\beta \sum_{i=1}^L h_i x_i - \beta \sum_{i\sim j} J_{ij} (x_i \ast x_j) )$ where we set $h_i = 0.1 $, $J_{ij}= 0.1$ and $\beta = 1/2.269$ same as above. The integrand function $f(x) = (1 + \exp(-\frac{1}{L} \sum_{i=1}^L x_i))^{-1}$. 
    \item $\Pb$ is the product of a uniform distribution over $\calX = [-1, 1]$ and a uniform distribution over $\calY = \left\{\, -1 + 0.125\,j \;:\; j = 0,1,\ldots,16 \,\right\}^2$. The dimensionality of this problem is $d = 3$. The function $f(x, y) = -20 \exp[- 0.2 |(1 + 0.1 y_1 + 0.1 y_2) x|] - \exp[\cos (2\pi x(1+0.05 y_1^2 + 0.05 y_2^2))] + 20 + e$. 
\end{enumerate}

Cases (a)–(d) cover all the various settings in which the proposed \textit{BayesSum} in \Cref{sec:dbq} are applicable: (a) intractable expectations over countably infinite domains; (b) expectations over finite but combinatorially large domains; (c) expectations over combinatorially large domains where the underlying distribution is known only up to a normalization constant; and (d) expectations over mixed domains involving both continuous and discrete components.
To enable extensive benchmarking against baseline methods, we set $L=15$ in cases (b) and (c), ensuring that the total number of possible states $3^{15}\approx 10^7$ remains computationally manageable such that the target expectation in \eqref{eq:I} can be computed exactly by explicitly summing over all possible states. 
For case (a), we truncate the infinite sum at $x=200$ 
and treat this value as the ground truth, as both the integrand $f(x)=\exp(-(x - 15)^2 / 8)$ and the Poisson distribution decays exponentially fast. 
Cases (d) admits a closed-form expression for its ground truth.

Regarding kernel choices for our BayesSum and active BayesSum, we employ a Brownian motion kernel in (a); an exponential Hamming distance kernel in (b); a Stein reproducing kernel built upon the exponential Hamming distance kernel in (c); and a product of the Gaussian kernel over the continuous domain and the exponential Hamming distance kernel over the discrete domain in (d). 
Closed-form expressions for the corresponding kernel mean embeddings are provided in \Cref{tab:discrete_kme}. 
For benchmarking, we compare against Monte Carlo (MC), Russian Roulette (RR), importance sampling (IS), and stratified sampling (SS). 
In cases (a), (b), and (d), MC samples are drawn i.i.d. with replacement, whereas BayesSum samples are drawn without replacement. This choice is intentional, as BayesSum suffers from singular Gram matrices with duplicate samples. 
(See \Cref{fig:iid_repetitive} for a comparison)
In case (c), samples for both BayesSum and MC are drawn from an unnormalized distribution using Metropolis–Hastings~\citep[Chapter 11]{gelman1995bayesian}. 
The empirical performances are reported by the mean absolute error with respect to the ground truth. 
Details of the baseline methods can be found in \Cref{sec:additional_experiments}.

\begin{figure*}[t]
    \centering
    \begin{subfigure}[t]{0.32\linewidth}
        \centering
        \includegraphics[width=\linewidth]{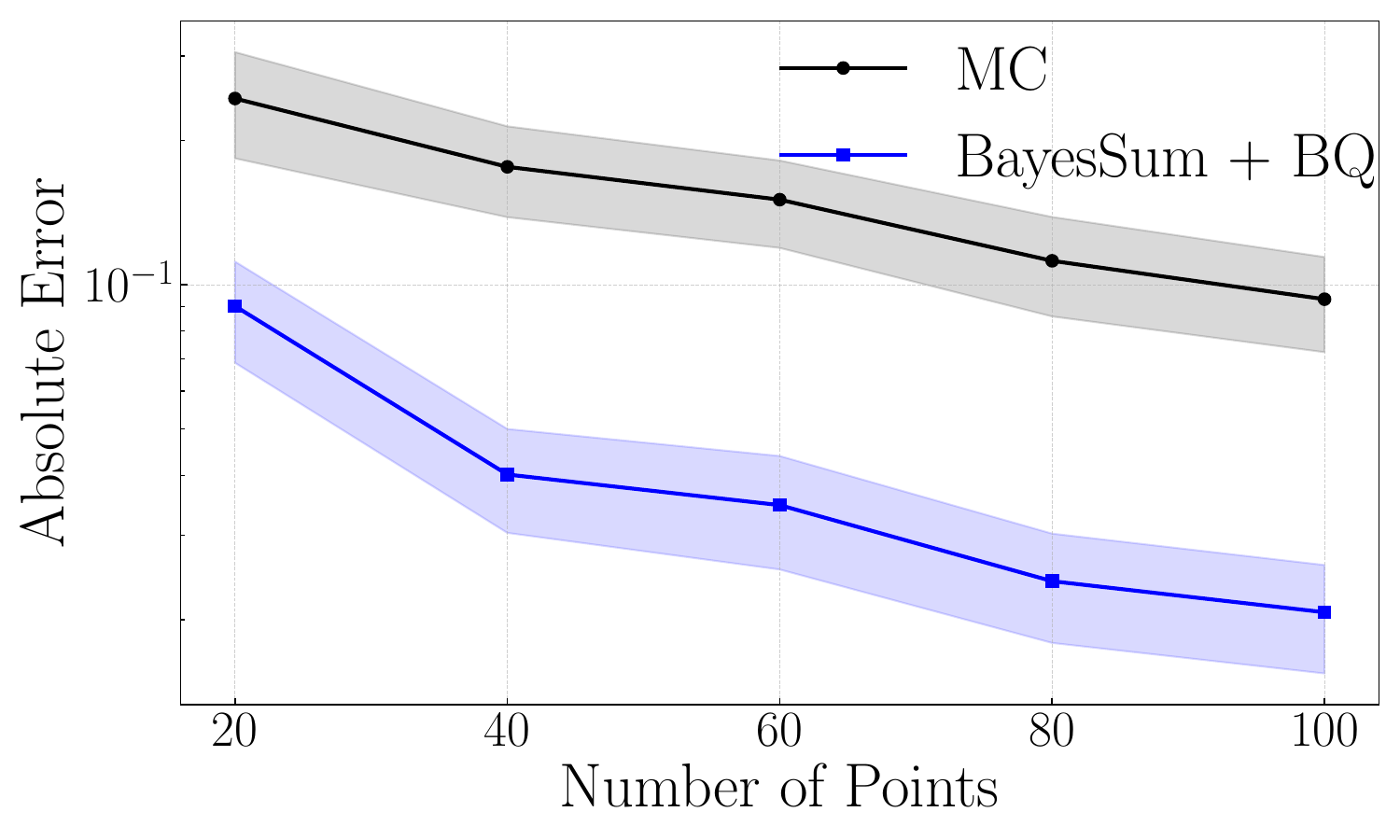}
    \end{subfigure}
    \hfill
    \begin{subfigure}[t]{0.32\linewidth}
        \centering
        \includegraphics[width=\linewidth]{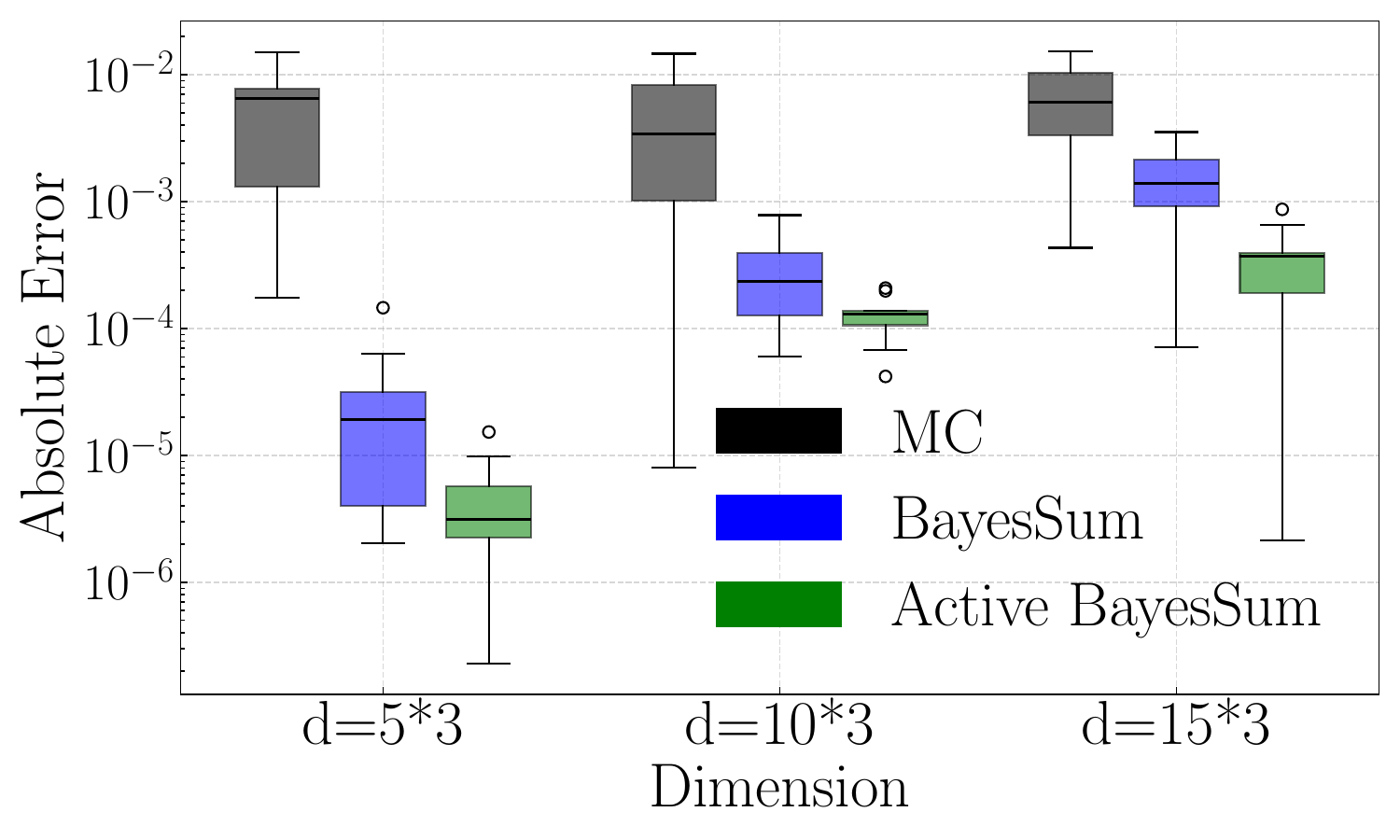}
    \end{subfigure}
    \hfill
    \begin{subfigure}[t]{0.32\linewidth}
        \centering
        \includegraphics[width=\linewidth]{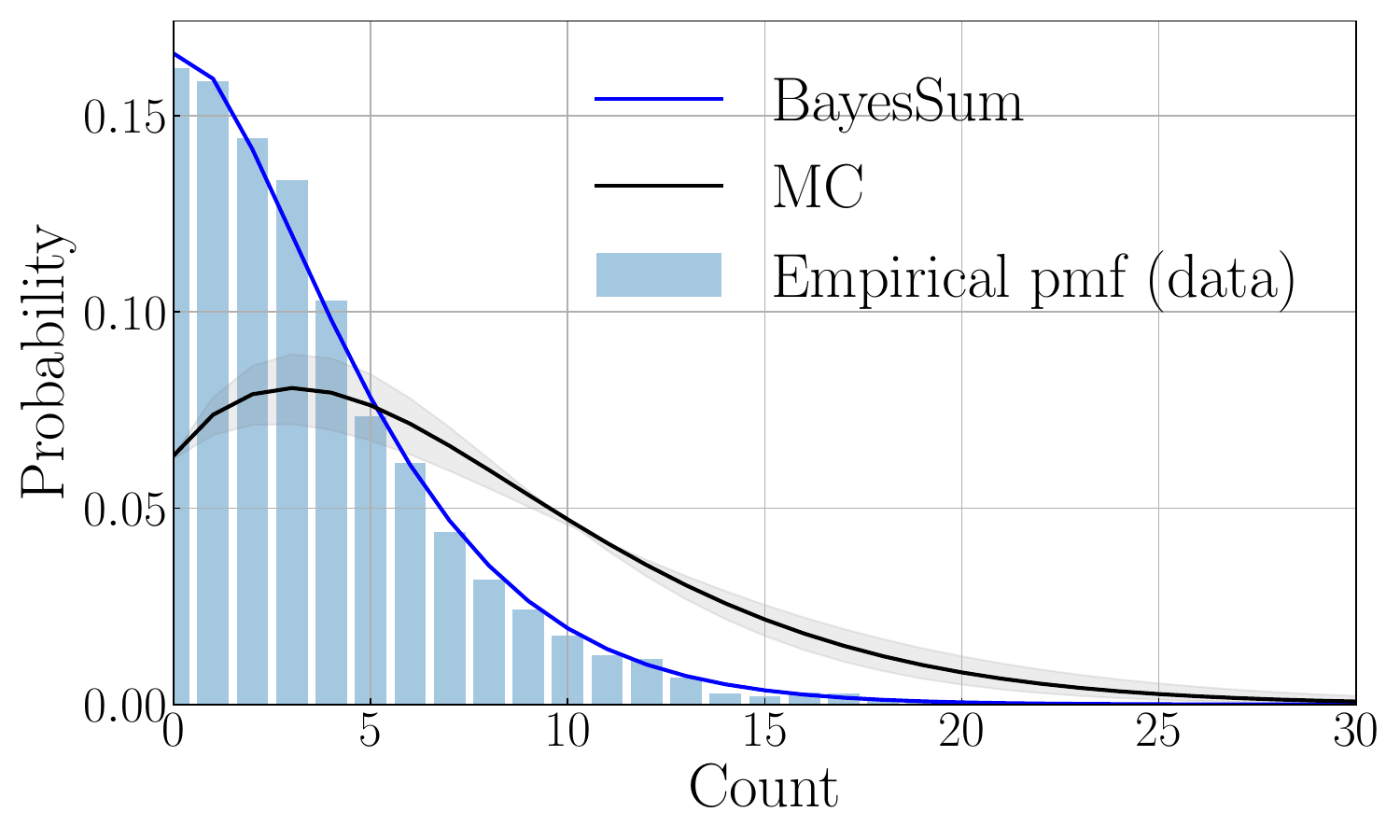}
    \end{subfigure}
    \vspace{-10pt}
    \caption{\textbf{Left:}  
    Comparison of BayesSum+BQ and MC over the mixed domain. \textbf{Middle:} Comparison of BayesSum, active BayesSum and MC with increasing dimension $d$ in case (b). \textbf{Right:} Comparison of BayesSum and MC in estimating the normalization constant for maximum likelihood inference of CMP model. BayesSum uses $N=10$ samples while MC uses $N=30$ samples. }
    \label{fig:calibration}
    \vspace{-10pt}
\end{figure*}

From \Cref{fig:toy} (\textbf{Left}) (\textbf{Middle}) (\textbf{Right}) and \Cref{fig:calibration} (\textbf{Left}), we observe that BayesSum achieves much smaller absolute errors than baseline methods for the same number of samples over all benchmarks from (a) to (d). 
This strong performance stems partly from the choice of kernel that encodes prior knowledge of the function $f$; in (a) the Brownian motion kernel captures smooth variations over the inputs, in (b)(c) the Hamming distance kernel captures smooth variations of $f$ across similar discrete sequences, and in (d), the product kernel jointly characterizes smoothness over the continuous domain and similarity over the discrete domain. 
We also observe that active BayesSum can further improve the sample efficiency of BayesSum, by selecting the next query point that drives the largest reduction in posterior uncertainty. 
We also compare in \Cref{fig:iid_repetitive} the performance of BayesSum with repetitive versus non-repetitive samples. We observe that repetitive sampling leads to notably larger errors, which verifies \Cref{thm:main} and \Cref{rem:iid_repeat}. 

To further illustrate the sample efficiency, we estimate the empirical convergence rate of all methods by performing a linear regression in log–log space between the number of samples and the mean absolute error. 
Note that for Poisson distribution, the rate is obtained with far more number of samples than plotted in \Cref{fig:toy} (\textbf{Left}). 
We observe that MC, RR, SS, and IS all exhibit similar convergence rates around $0.5$. This is consistent with the central limit theorem for i.i.d. samples, as well as for samples drawn from a well-mixed Markov chain~\citep{mcbook}. 
In contrast, \Cref{tab:empirical_rate} shows that both our BayesSum and Active BayesSum achieve substantially faster rates of convergence, confirming their superiority in sample efficiency.

In \Cref{fig:calibration} (\textbf{Middle}), we also study the performance of BayesSum, active BayesSum in case (b) with increasing dimension $d$. We observe that the performance margin over the baselines gets smaller as $d$ increases. This is expected, since Bayesian quadrature or Gaussian process are known to suffer in high dimensions~\citep{chen2024conditional,briol2019probabilistic}, and a similar phenomenon is anticipated in discrete domains. 
We also study the calibration of the BayesSum posterior variance in \Cref{fig:toy_additional}. We observe that the posterior variances for the uniform distribution over $\{0,1,2\}^d$ in case (b) and the mixed distribution in case (d) are well-calibrated, whereas the posterior variance for the Poisson distribution in (a) and for the Potts model in (d) are underconfident. 
This behavior arises because the Brownian kernel and the Stein kernel are unbounded, which leads to inflated posterior variances even with carefully tuned kernel amplitudes. 

\begin{table}[t]
\centering
\caption{Comparison of empirical convergence rate of different methods. 
For each method, we report the estimated $\alpha > 0$ such that the integration error scales as $O(n^{-\alpha})$. 
Larger values indicate faster convergence. }
\label{tab:empirical_rate}
\vspace{-10pt}
\begin{tabular}{lccc}
\toprule
\textbf{Method} & \textbf{Poisson} & \textbf{Uniform} & \textbf{Potts} \\
\midrule
MC & $0.455$ & $0.530$ & $0.540$ \\
\textcolor{red}{RR} & $0.335$ & $0.515$ & --- \\
\textcolor{cyan}{SS} & $0.454$ & $0.552$ & --- \\
\textcolor{brown}{IS} & $0.515$ & $0.545$ & --- \\
\textcolor{blue}{BayesSum} & $7.322$ & $1.139$ & $\boldsymbol{1.800}$ \\
\textcolor{ForestGreen}{Active BayesSum} & $\boldsymbol{7.706}$ & $\boldsymbol{1.251}$ & --- \\
\bottomrule
\end{tabular}
\vspace{-10pt}
\end{table}

\subsection{Parameter estimation
for unnormalised models}

\textbf{Conway–Maxwell–Poisson model for count data:} The Conway–Maxwell–Poisson (CMP) distribution is a generalization of the Poisson distribution over $\calX=\N$~\citep{conway1961queueing}, which has been widely used for count data in a wide range of fields including transport, finance and retail~\citep{inouye2017review,benson2021bayesian,sellers2010flexible}. 
The CMP distribution has two parameters $(\theta_1, \theta_2) \in (0, \infty) \times [0,\infty)$ and its probability mass function $p(x; \theta_1,\theta_2)$ is known only up to a normalization constant $Z(\theta_1,\theta_2)$: 
\begin{align*}
    p(x; \theta_1,\theta_2)= \frac{\theta_1^x (x!)^{-\theta_2}}{ Z(\theta_1,\theta_2)}, \quad Z(\theta_1,\theta_2) =\sum_{j=0}^{\infty} \theta_1^j (j!)^{-\theta_2} .
\end{align*}
Our goal is to model the sales dataset of \citet{shmueli2005useful} with a CMP distribution with parameters $(\theta_1, \theta_2)$. To estimate these parameters via gradient-based maximum likelihood, it is necessary to compute the normalization constant $Z(\theta_1,\theta_2)$. 
To enable the use of BayesSum, we first rewrite the normalization constant $Z(\theta_1, \theta_2)$ as an expectation under a Poisson distribution with rate parameter $\eta_0$: $Z(\theta_1, \theta_2) = e^{\eta_0}  \E_{X \sim\text{Poisson}(\eta_0)}[\frac{\theta_1}{\eta_0}(X!)^{1-\theta_2}]$, where $\eta_0$ is set to be the empirical mean of the data.
The key reason we write the normalization constant as an expectation with respect to a fixed distribution is computational: we only need to draw $N = 10$ samples without replacement and to compute the BayesSum weights $w_{1:N} = \mu_\Pb(x_{1: N})^{\top} \mathbf{K}^{-1}$ once at initialization. 
We use the Brownian motion kernel that admits a closed form kernel mean embedding $\mu_\Pb$. 
Thereafter, estimating the normalization constant at each iteration amounts to a weighted sum $\hat{Z}_\bq(\theta_1, \theta_2)=\sum_{i=1}^N w_i \frac{\theta_1}{\eta_0} (x_i)!^{1-\theta_2}$, making BayesSum similar computational cost as MC. 

With this approximation, the parameters $(\theta_1, \theta_2)$ are then updated via gradient descent on the estimated negative log-likelihood using a learning rate of $1 \times 10^{-3}$ for 800 iterations where convergence is empirically observed. 
For comparison, we perform the same training procedure where $Z(\theta_1, \theta_2)$ is instead approximated by a Monte Carlo estimator $\hat{Z}_{\text{MC}}(\theta_1, \theta_2)$ with $N=30$ samples at each iteration. 
As shown in \Cref{fig:calibration} \textbf{(Right)}, maximum-likelihood training with  $\hat{Z}_{\text{BQ}}$ yields parameters $(\theta_1,\theta_2)$ that better capture the true distribution of the sales dataset than those obtained with $\hat{Z}_{\text{MC}}$. 
This improvement arises because our BayesSum provides a more accurate estimate of the normalization constant even with fewer samples, and consequently, of the gradient of the log-likelihood at each iteration.
This can be further reflected in \Cref{fig:trajectory} where we plot the optimization trajectories of $(\theta_1, \theta_2)$ for BayesSum and Monte Carlo. 
The computational time is $4.2$s for BayesSum and $4.0$s for Monte Carlo. 

\textbf{Potts model for sequence data: }
The Potts model is a popular model for protein sequences due to its explicit representation of pairwise couplings where mutations at one site are often compensated by changes at another.
Formally, the Potts model defines the likelihood of a sequence $x = [x_1, \ldots, x_L] \in \{1, 2, \ldots, S\}^L$ as
\begin{align}\label{eq:potts}
    &p(x; \boldsymbol{\theta}) = \frac{\exp \left( \sum_{i=1}^L \beta h_i x_i + \sum_{i \sim j} \beta J_{i j}(x_i \ast x_j) \right)}{Z(\boldsymbol{\theta})}\nonumber \\
    &Z(\boldsymbol{\theta}) = \sum_{x} \exp \left( \sum_{i=1}^L \beta h_i x_i + \sum_{i\sim j} J_{i j}(x_i \ast x_j) \right) .
\end{align}
Here, the parameter vector $\boldsymbol{\theta}$ consists of two components: $\mathbf{h} = [h_1,\ldots,h_L]^\top \in \mathbb{R}^L$, which denotes the marginal fields, and $\mathbf{J} = [J_{i,j}] \in \mathbb{R}^{L \times L}$, which encodes the pairwise interaction strengths between $x_i$ and $x_j$. 
The symbol $\ast$ denotes an element-wise product between their one-hot encoded vectors. 
$\beta = 1/2.269$ represents the inverse temperature. 
The normalization constant $Z(\boldsymbol{\theta})$ is the sum over $L^S$ terms, which quickly becomes intractable as $S$ and $L$ grow. 

\begin{figure}[t]
    \centering
    \includegraphics[width=0.75\linewidth]{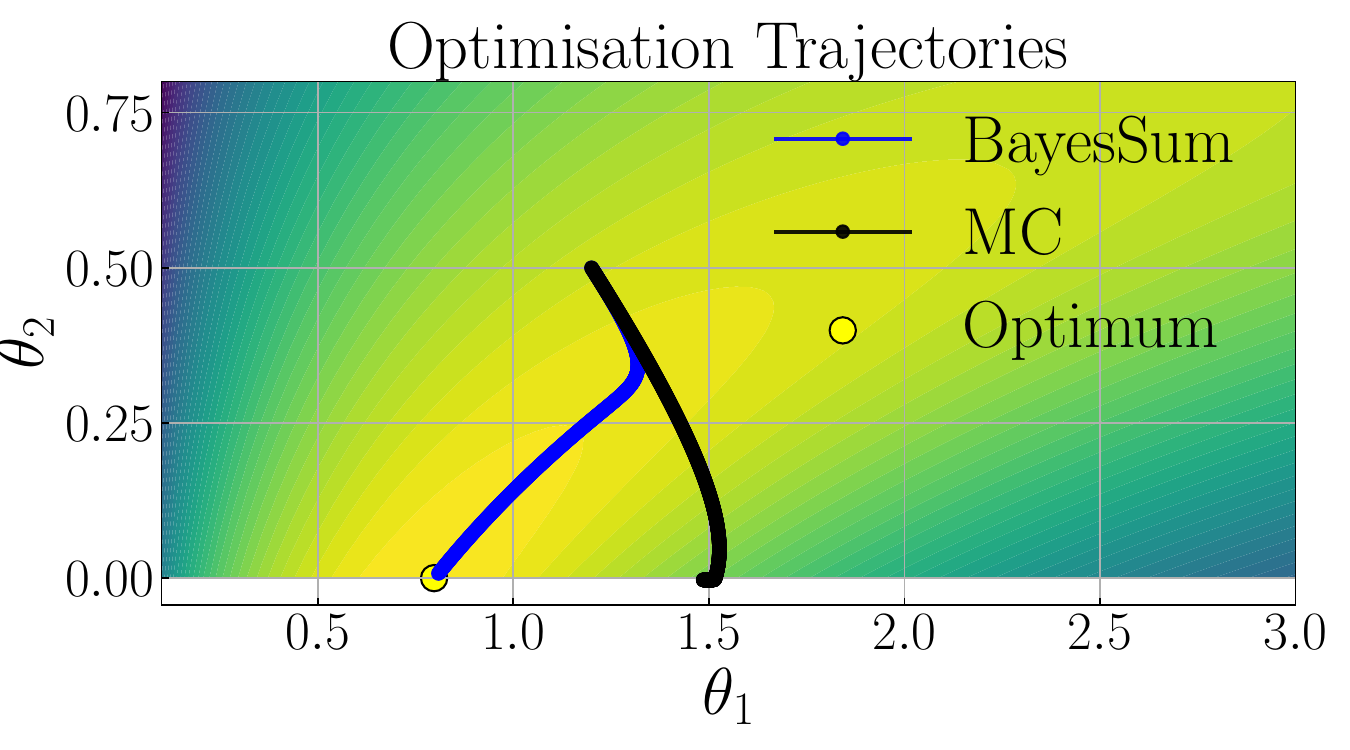}
    \vspace{-10pt}
    \caption{The optimization trajectory of training CMP model with normalization constant estimated via BayesSum and MC.}
    \label{fig:trajectory}
    \vspace{-20pt}
\end{figure}

We train the Potts model via maximum likelihood using $2000$ protein sequences of length $L = 15$ generated as follows: each symbol is independently drawn from the alphabet $\{1, 2, 3\}$ with probabilities $\{0.4, 0.4, 0.2\}$. After one-hot encoding, the total dimension of the problem becomes $d = 15 \times 3 = 45$. 
Same as the CMP distribution above, 
we perform gradient descent on the log-likelihood and estimate the intractable normalization constant $Z$ at each iteration. 
To this end, conditioned on a sample $x^\prime$ from the last iteration, we draw $\lceil N / M \rceil$ samples from a product distribution $\prod_{j=1}^L \bar{p}_j(x_j \mid x')$ with 
\begin{align*}
     \bar{p}_j(x_j\mid x'; \mathbf{h}, \mathbf{J}) := \text{softmax}(\beta h_j + \beta \sum_{i \sim j} J_{ij}(x_i' \ast x_j) ) .  
\end{align*}
This product distribution is motivated by the pseudo-likelihood approach, which replaces the joint distribution with a product of conditional distributions~\citep{ekeberg2013improved}. 
We repeat this procedure for $M$ independent choices of $x^{\prime}$, yielding a total of $N$ samples.
Then, we approximate $Z(\boldsymbol{\theta})$ using our BayesSum estimator $\hat{Z}_{\bq}$ from these $N$ samples. 
With this approximation, then we are able to optimize the parameters $\mathbf{h}, \mathbf{J}$ by performing gradient descent on the estimated negative log-likelihood, using a learning rate of $10^{-3}$ for $2000$ iterations until convergence is empirically observed. 
For comparison, we repeat the same procedure as above yet with $Z(\boldsymbol{\theta})$ estimated by a Monte Carlo estimator $\hat{Z}_{\text{MC}}$ from the same $N$ samples as BayesSum at each iteration.

In \Cref{fig:potts_protein}, we compare the two Potts models trained using the above two procedures measured in terms of the squared discrete kernel Stein discrepancy (KSD$^2$)~\citep{yang2018goodness} with respect to the true data generating distribution. 
\Cref{fig:potts_protein} shows that the final Potts model trained with $\hat{Z}_{\bq}$ attains much lower KSD$^2$ than the model trained with $\hat{Z}_{\text{MC}}$, both under an equal computational budget in wall-clock time (\textbf{top}) and under an equal number of samples (\textbf{bottom}). 
This result is consistent with case~(b) of the synthetic experiments, demonstrating that the benefit of employing a more accurate estimator of the normalization constant would translate into a better trained model.

\begin{figure}[t]
    \centering
    \begin{subfigure}[t]{0.8\linewidth}
        \centering
        \includegraphics[width=\linewidth]{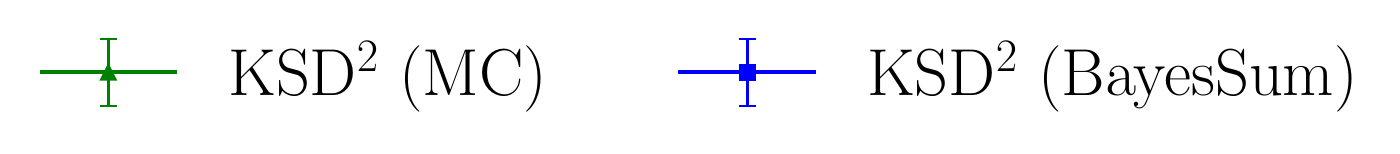}
    \end{subfigure}
    \begin{subfigure}[t]{0.9\linewidth}
        \centering
        \includegraphics[width=\linewidth]{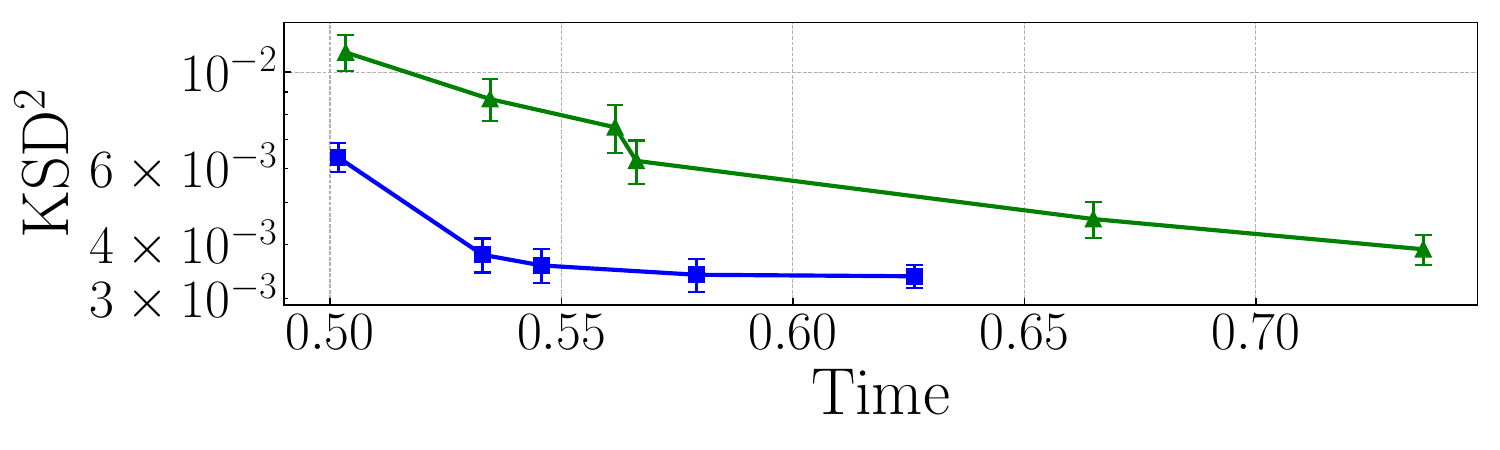}
    \end{subfigure}
    \vspace{6pt}
    \begin{subfigure}[t]{0.9\linewidth}
        \centering
        \includegraphics[width=\linewidth]{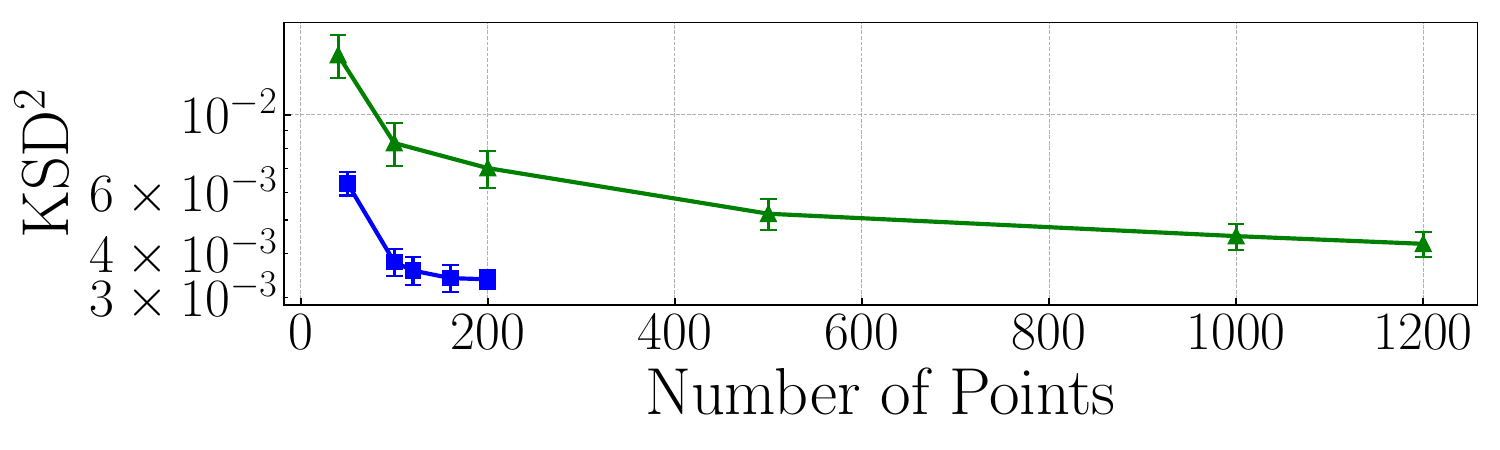}
    \end{subfigure}
    \vspace{-10pt}
    \caption{Comparison of a large Potts model trained via maximum log likelihood, with the normalization constant estimated by BayesSum and Monte Carlo, under the same computational time (\textbf{top}) and sample size $N$ (\textbf{bottom}). Error bars represent the standard error repeated over $30$ random seeds. }
    \label{fig:potts_protein}
    \vspace{-20pt}
\end{figure}

\section{Conclusions}\label{sec:conclusion}
In this paper, we introduced BayesSum, a novel algorithm for estimating intractable expectations over discrete domains. BayesSum offers improved sample efficiency and provides finite-sample uncertainty quantification for its estimates, making it particularly attractive in scenarios where obtaining samples or function evaluations are expensive. 

Following our work, there remains a number of interesting open problems. First, it would be valuable to establish convergence rate of BayesSum where the key theoretical challenge lies in identifying families of kernels whose RKHSs are `equivalent' to Sobolev spaces defined over discrete domains~\citep{stevenson1991discrete}, while still admitting closed-form kernel mean embeddings.
Furthermore, a natural extension would be to apply BayesSum to  multiple-related expectation, conditional expectations, or nested expectations that frequently arise in hierarchical Bayesian inference and decision-making under uncertainty~\citep{chen2024conditional,chennested,xi2018bayesian}. Finally, alternative to Gaussian processes priors have shown promise for integration in the continuous setting \citep{Oates2017heart,Zhu2020,Ott2023} and it could be interesting to study discrete equivalents.

\subsubsection*{Acknowledgments}
The authors acknowledge support from the UK's Engineering and Physical
Sciences Research Council (ESPRC) through grants
[EP/S021566/1] (for ZC) and [EP/Y022300/1]
(for FXB).
TK was supported by the Research Council of Finland grants 359183 and 368086.
TK acknowledges the research environment provided by ELLIS Institute Finland.

\bibliographystyle{plainnat}  
\bibliography{main}  

@article{conway1961queueing,
  title={A queueing model with state dependent service rate},
  author={R. W. Conway},
  journal={Journal of Industrial Engineering},
  volume={12},
  pages={132},
  year={1961}
}

@article{karvonen2018bayes,
  title={A {Bayes-Sard} cubature method},
  author={Karvonen, T. and Oates, C. J. and Sarkka, S.},
  journal={Advances in Neural Information Processing Systems},
  volume={31},
  year={2018}
}

@book{davis2007methods,
  title={Methods of numerical integration},
  author={Davis, P. J. and Rabinowitz, P.},
  year={2007},
  publisher={Courier Corporation}
}

@article{stevenson1991discrete,
  title={Discrete {Sobolev} spaces and regularity of elliptic difference schemes},
  author={Stevenson, R.},
  journal={ESAIM: Mathematical Modelling and Numerical Analysis},
  volume={25},
  number={5},
  pages={607--640},
  year={1991},
  publisher={EDP Sciences}
}

@article{fukumizu2011kernel,
  title={{Kernel Bayes'} rule},
  author={Fukumizu, K. and Song, L. and Gretton, A.},
  journal={Advances in Neural Information Processing Systems},
  volume={24},
  year={2011}
}

@article{nishiyama2020model, title={Model-based kernel sum rule: kernel {Bayesian} inference with probabilistic models}, author={Nishiyama, Y. and Kanagawa, M. and Gretton, A. and Fukumizu, K.}, journal={Machine Learning}, volume={109}, number={5}, pages={939--972}, year={2020}, publisher={Springer} }

@book{gelman1995bayesian,
  title={Bayesian data analysis},
  author={Gelman, A. and Carlin, J. B. and Stern, H. S. and Rubin, D. B.},
  year={1995},
  publisher={Chapman and Hall/CRC}
}

@article{karvonen2019symmetry,
  title={Symmetry exploits for {Bayesian} cubature methods},
  author={Karvonen, T. and Sarkka, S. and Oates, C. J.},
  journal={Statistics and Computing},
  volume={29},
  number={6},
  pages={1231--1248},
  year={2019},
  publisher={Springer}
}

@article{epperly2023kernel,
  title={Kernel quadrature with randomly pivoted {Cholesky}},
  author={Epperly, E. and Moreno, E.},
  journal={Advances in Neural Information Processing Systems},
  volume={36},
  pages={65850--65868},
  year={2023}
}

@inproceedings{hayakawa2023sampling,
  title={Sampling-based {Nystr{\"o}m} approximation and kernel quadrature},
  author={Hayakawa, S. and Oberhauser, H. and Lyons, T.},
  booktitle={International Conference on Machine Learning},
  pages={12678--12699},
  year={2023},
  organization={PMLR}
}

@article{hayakawa2022positively,
  title={Positively weighted kernel quadrature via subsampling},
  author={Hayakawa, S. and Oberhauser, H. and Lyons, T.},
  journal={Advances in Neural Information Processing Systems},
  volume={35},
  pages={6886--6900},
  year={2022}
}

@article{kuo2025constructing,
  title={Constructing embedded lattice-based algorithms for multivariate function approximation with a composite number of points},
  author={Kuo, F. Y. and Mo, W. and Nuyens, D.},
  journal={Constructive Approximation},
  volume={61},
  number={1},
  pages={81--113},
  year={2025},
  publisher={Springer}
}

@article{karvonen2018fully,
  title={Fully symmetric kernel quadrature},
  author={Karvonen, T. and Sarkka, S.},
  journal={SIAM Journal on Scientific Computing},
  volume={40},
  number={2},
  pages={A697--A720},
  year={2018},
  publisher={SIAM}
}

@book{steinwart2008support,
  title={Support vector machines},
  author={Steinwart, I. and Christmann, A.},
  year={2008},
  publisher={Springer Science \& Business Media}
}

@article{fortuin2021sparse,
  title={{Sparse Gaussian} processes on discrete domains},
  author={Fortuin, V. and Dresdner, G. and Strathmann, H. and Ratsch, G.},
  journal={IEEE Access},
  volume={9},
  pages={76750--76758},
  year={2021},
  publisher={IEEE}
}

@inproceedings{ru2020bayesian, title={Bayesian optimisation over multiple continuous and categorical inputs}, author={Ru, B. and Alvi, A. and Nguyen, V. and Osborne, M. A. and Roberts, S.}, booktitle={International Conference on Machine Learning}, pages={8276--8285}, year={2020}, organization={PMLR} }

@article{kanagawa2018gaussian,
  title={Gaussian processes and kernel methods: A review on connections and equivalences},
  author={Kanagawa, M. and Hennig, P. and Sejdinovic, D. and Sriperumbudur, B. K.},
  journal={arXiv preprint arXiv:1807.02582},
  year={2018}
}

@article{balandat2020botorch,
  title={{BoTorch}: A framework for efficient {Monte Carlo} {Bayesian Optimization}},
  author={Balandat, M. and Karrer, B. and Jiang, D. and Daulton, S. and Letham, B. and Wilson, A. G. and Bakshy, E.},
  journal={Advances in Neural Information Processing Systems},
  volume={33},
  pages={21524--21538},
  year={2020}
}

@book{gartner2008kernels,
  title={Kernels for structured data},
  author={Gartner, T.},
  volume={72},
  year={2008},
  publisher={World Scientific}
}

@article{shmueli2005useful,
  title={A useful distribution for fitting discrete data: revival of the {Conway-Maxwell-Poisson} distribution},
  author={Shmueli, G. and Minka, T. P. and Kadane, J. B. and Borle, S. and Boatwright, P.},
  journal={Journal of the Royal Statistical Society Series C: Applied Statistics},
  volume={54},
  number={1},
  pages={127--142},
  year={2005},
  publisher={Oxford University Press}
}

@article{inouye2017review,
  title={A review of multivariate distributions for count data derived from the {Poisson} distribution},
  author={Inouye, D. I. and Yang, E. and Allen, G. I. and Ravikumar, P.},
  journal={Wiley Interdisciplinary Reviews: Computational Statistics},
  volume={9},
  number={3},
  pages={e1398},
  year={2017},
  publisher={Wiley Online Library}
}

@article{lyne2015russian,
  title={On {Russian} roulette Estimates for {Bayesian} Inference with Doubly-Intractable Likelihoods},
  author={Lyne, A.-M. and Girolami, M. and Atchad{\'e}, Y. and Strathmann, H. and Simpson, D.},
  journal={Statistical Science},
  volume={30},
  number={4},
  pages={443--467},
  year={2015}
}

@article{briol2025dictionary,
abstract = {Kernel mean embeddings -- integrals of a kernel with respect to a probability distribution -- are essential in Bayesian quadrature, but also widely used in other computational tools for numerical integration or for statistical inference based on the maximum mean discrepancy. These methods often require, or are enhanced by, the availability of a closed-form expression for the kernel mean embedding. However, deriving such expressions can be challenging, limiting the applicability of kernel-based techniques when practitioners do not have access to a closed-form embedding. This paper addresses this limitation by providing a comprehensive dictionary of known kernel mean embeddings, along with practical tools for deriving new embeddings from known ones. We also provide a Python library that includes minimal implementations of the embeddings.},
author = {Briol, F.-X. and Gessner, A. and Karvonen, T. and Mahsereci, M.},
file = {:Users/fxbriol/Docs/Work/Papers/Kernel Methods/Kernel Embeddings/(2025, Briol) A dictionary of closed-form kernel mean embeddings.pdf:pdf},
journal = {Proceedings of the First International Conference on Probabilistic Numerics, PMLR},
pages = {84--94},
title = {{A dictionary of closed-form kernel mean embeddings}},
volume = {271},
year = {2025}
}

@inproceedings{hendricks2006mcnp,
  title={{MCNP} variance reduction overview},
  author={Hendricks, J. S. and Booth, T. E.},
  booktitle={Monte-Carlo Methods and Applications in Neutronics, Photonics and Statistical Physics},
  pages={83--92},
  year={2006},
  organization={Springer}
}

@article{cockayne2019bayesian,
  title={Bayesian probabilistic numerical methods},
  author={Cockayne, J. and Oates, C. J. and Sullivan, T. J. and Girolami, M.},
  journal={SIAM Review},
  volume={61},
  number={4},
  pages={756--789},
  year={2019},
  publisher={SIAM}
}

@book{wendland2004scattered,
  title={Scattered data approximation},
  author={Wendland, H.},
  volume={17},
  year={2004},
  publisher={Cambridge University Press}
}

@article{owen2003quasi,
  title={{Quasi-Monte Carlo} sampling},
  author={Owen, A. B.},
  journal={Monte Carlo Ray Tracing: Siggraph},
  volume={1},
  pages={69--88},
  year={2003},
  publisher={Citeseer}
}

@article{caflisch1998monte,
  title={{Monte Carlo} and {quasi-Monte Carlo} methods},
  author={Caflisch, R. E.},
  journal={Acta Numerica},
  volume={7},
  pages={1--49},
  year={1998},
  publisher={Cambridge University Press}
}

@book{mcbook,
   author = {Owen, A. B.},
   year = 2013,
   title = {{Monte Carlo} theory, methods and examples},
   howpublished = {\url{https://artowen.su.domains/mc/}}
}

@book{williams2006gaussian,
  title={Gaussian processes for machine learning},
  author={Williams, C. K. I. and Rasmussen, C. E.},
  volume={2},
  number={3},
  year={2006},
  publisher={MIT Press, Cambridge, MA}
}

@book{rubinstein2016simulation,
  title={Simulation and the {Monte Carlo} method},
  author={Rubinstein, R. Y. and Kroese, D. P.},
  year={2016},
  publisher={John Wiley \& Sons}
}

@article{piancastelli2023multivariate,
  title={Multivariate {Conway-Maxwell-Poisson} distribution: {Sarmanov} method and doubly intractable {Bayesian} inference},
  author={Piancastelli, L. S. C. and Friel, N. and Barreto-Souza, W. and Ombao, H.},
  journal={Journal of Computational and Graphical Statistics},
  volume={32},
  number={2},
  pages={483--500},
  year={2023},
  publisher={Taylor \& Francis}
}

@book{baxter2016exactly,
  title={Exactly solved models in statistical mechanics},
  author={Baxter, R. J.},
  year={2016},
  publisher={Elsevier}
}

@article{Diaconis1988,
  title={{B}ayesian numerical analysis},
  author={Diaconis, P.},
  journal={Statistical Decision Theory and Related Topics IV},
  volume={1},
  pages={163--175},
  year={1988}
}

@article{o1991bayes,
  title={{Bayes-Hermite} quadrature},
  author={O'Hagan, A.},
  journal={Journal of Statistical Planning and Inference},
  volume={29},
  number={3},
  pages={245--260},
  year={1991},
  publisher={Elsevier}
}

@article{ekeberg2013improved,
  title={Improved contact prediction in proteins: using pseudolikelihoods to infer Potts models},
  author={Ekeberg, M. and L{\"o}vkvist, C. and Lan, Y. and Weigt, M. and Aurell, E.},
  journal={Physical Review E—Statistical, Nonlinear, and Soft Matter Physics},
  volume={87},
  number={1},
  pages={012707},
  year={2013},
  publisher={APS}
}

@article{rasmussen2003bayesian,
  title={Bayesian {Monte Carlo}},
  author={Rasmussen, C. E. and Ghahramani, Z.},
  journal={Advances in Neural Information Processing Systems},
  pages={505--512},
  year={2003}, 
  publisher={MIT; 1998}
}

@article{Karvonen2020,
abstract = {The Gaussian kernel plays a central role in machine learning, uncertainty quantification and scattered data approximation, but has received relatively little attention from a numerical analysis standpoint. The basic problem of finding an algorithm for efficient numerical integration of functions reproduced by Gaussian kernels has not been fully solved. In this article we construct two classes of algorithms that use $N$ evaluations to integrate $d$-variate functions reproduced by Gaussian kernels and prove the exponential or super-algebraic decay of their worst-case errors. In contrast to earlier work, no constraints are placed on the length-scale parameter of the Gaussian kernel. The first class of algorithms is obtained via an appropriate scaling of the classical Gauss-Hermite rules. For these algorithms we derive lower and upper bounds on the worst-case error of the forms $\exp(-c_1 N^{1/d}) N^{1/(4d)}$ and $\exp(-c_2 N^{1/d}) N^{-1/(4d)}$, respectively, for positive constants $c_1 > c_2$. The second class of algorithms we construct is more flexible and uses worst-case optimal weights for points that may be taken as a nested sequence. For these algorithms we only derive upper bounds, which are of the form $\exp(-c_3 N^{1/(2d)})$ for a positive constant $c_3$.},
author = {Karvonen, T. and Oates, C. J. and Girolami, M.},
file = {:Users/fxbriol/Docs/Work/Papers/Probabilistic Numerics/(2020, Karvonen) Integration in reproducing kernel Hilbert spaces of Gaussian kernels.pdf:pdf},
journal = {Mathematics of Computation},
number = {331},
pages = {2209--2233},
title = {{Integration in reproducing kernel Hilbert spaces of Gaussian kernels}},
volume = {90},
year = {2020}
}

@article{oates2017control,
  title={Control functionals for Monte Carlo integration},
  author={Oates, C. J. and Girolami, M. and Chopin, N.},
  journal={Journal of the Royal Statistical Society Series B: Statistical Methodology},
  volume={79},
  number={3},
  pages={695--718},
  year={2017},
  publisher={Oxford University Press}
}

@inproceedings{Bharti2023,
abstract = {Likelihood-free inference methods typically make use of a distance between simulated and real data. A common example is the maximum mean discrepancy (MMD), which has previously been used for approximate Bayesian computation, minimum distance estimation, generalised Bayesian inference, and within the nonparametric learning framework. The MMD is commonly estimated at a root-$m$ rate, where $m$ is the number of simulated samples. This can lead to significant computational challenges since a large $m$ is required to obtain an accurate estimate, which is crucial for parameter estimation. In this paper, we propose a novel estimator for the MMD with significantly improved sample complexity. The estimator is particularly well suited for computationally expensive smooth simulators with low- to mid-dimensional inputs. This claim is supported through both theoretical results and an extensive simulation study on benchmark simulators.},
author = {Bharti, A. and Naslidnyk, M. and Key, O. and Kaski, S. and Briol, F-X.},
booktitle = {International Conference on Machine Learning},
file = {:Users/fxbriol/Docs/Work/Papers/Probabilistic Numerics/(2023, Bharti) Optimally-weighted estimators of the maximum mean discrepancy for likelihood free inference.pdf:pdf},
pages = {2289--2312},
title = {{Optimally-weighted estimators of the maximum mean discrepancy for likelihood-free inference}},
year = {2023}
}

@inproceedings{Oates2017heart,
abstract = {This paper considers numerical approximation for integrals of the form $$\int f(x) p(\mathrm{d}x)$$ in the case where $f(x)$ is an expensive black-box function and $p(\mathrm{d}x)$ is an intractable distribution (meaning that it is accessible only through a finite collection of samples). Our proposal extends previous work that treats numerical integration as a problem of statistical inference, in that we model both $f$ as an a priori unknown random function and $p$ as an a priori unknown random distribution. The result is a posterior distribution over the value of the integral that accounts for these dual sources of approximation error. This construction is designed to enable the principled quantification and propagation of epistemic uncertainty due to numerical error through a computational pipeline. The work is motivated by such problems that occur in the Bayesian calibration of computer models.},
author = {Oates, C. J. and Niederer, S. and Lee, A. and Briol, F-X. and Girolami, M.},
booktitle = {Neural Information Processing Systems},
file = {:Users/fxbriol/Library/Application Support/Mendeley Desktop/Downloaded/Oates et al. - 2017 - Probabilistic models for integration error in the assessment of functional cardiac models.pdf:pdf},
pages = {110--118},
title = {{Probabilistic models for integration error in the assessment of functional cardiac models}},
year = {2017}
}

@inproceedings{Zhu2020,
abstract = {Bayesian quadrature (BQ) is a method for solving numerical integration problems in a Bayesian manner, which allows users to quantify their uncertainty about the solution. The standard approach to BQ is based on a Gaussian process (GP) approximation of the integrand. As a result, BQ is inherently limited to cases where GP approximations can be done in an efficient manner, thus often prohibiting very high-dimensional or non-smooth target functions. This paper proposes to tackle this issue with a new Bayesian numerical integration algorithm based on Bayesian Additive Regression Trees (BART) priors, which we call BART-Int. BART priors are easy to tune and well-suited for discontinuous functions. We demonstrate that they also lend themselves naturally to a sequential design setting and that explicit convergence rates can be obtained in a variety of settings. The advantages and disadvantages of this new methodology are highlighted on a set of benchmark tests including the Genz functions, and on a Bayesian survey design problem.},
author = {Zhu, H and Liu, X. and Kang, R. and Shen, Z. and Flaxman, S. and Briol, F-X.},
booktitle = {Neural Information Processing Systems},
file = {:Users/fxbriol/Docs/Work/Papers/Probabilistic Numerics/(2020, Zhu) Bayesian probabilistic numerical integration with tree-based models.pdf:pdf},
pages = {5837--5849},
title = {{Bayesian probabilistic numerical integration with tree-based models}},
year = {2020}
}

@inproceedings{Ott2023,
abstract = {Bayesian probabilistic numerical methods for numerical integration offer significant advantages over their non-Bayesian counterparts: they can encode prior information about the integrand, and can quantify uncertainty over estimates of an integral. However, the most popular algorithm in this class, Bayesian quadrature, is based on Gaussian process models and is therefore associated with a high computational cost. To improve scalability, we propose an alternative approach based on Bayesian neural networks which we call Bayesian Stein networks. The key ingredients are a neural network architecture based on Stein operators, and an approximation of the Bayesian posterior based on the Laplace approximation. We show that this leads to orders of magnitude speed-ups on the popular Genz functions benchmark, and on challenging problems arising in the Bayesian analysis of dynamical systems, and the prediction of energy production for a large-scale wind farm.},
author = {Ott, K. and Tiemann, M. and Hennig, P. and Briol, F-X.},
booktitle = {Proceedings of the Thirty-Ninth Conference on Uncertainty in Artificial Intelligence, PMLR 216},
file = {:Users/fxbriol/Docs/Work/Papers/Probabilistic Numerics/(2023, Ott) Bayesian Numerical Integration with Neural Networks.pdf:pdf},
pages = {1606--1617},
title = {{Bayesian numerical integration with neural networks}},
year = {2023}
}

@inproceedings{huszar2012optimally,
  title={Optimally-weighted herding is Bayesian quadrature},
  author={Husz{\'a}r, F. and Duvenaud, D.},
  booktitle={Proceedings of the Twenty-Eighth Conference on Uncertainty in Artificial Intelligence},
  pages={377--386},
  year={2012}
}

@inproceedings{sun2023vector,
  title={Vector-valued control variates},
  author={Sun, Z. and Barp, A. and Briol, F.-X.},
  booktitle={International Conference on Machine Learning},
  pages={32819--32846},
  year={2023},
  organization={PMLR}
}

@article{sellers2010flexible,
  title={A flexible regression model for count data},
  author={Sellers, Kimberly F and Shmueli, Galit},
  journal={The Annals of Applied Statistics},
  pages={943--961},
  year={2010},
  publisher={JSTOR}
}

@inproceedings{Kanagawa2016,
author = {Kanagawa, M. and Sriperumbudur, B. and Fukumizu, K.},
booktitle = {Advances in Neural Information Processing Systems},
file = {:Users/fxbriol/Library/Application Support/Mendeley Desktop/Downloaded/Kanagawa, Sriperumbudur, Fukumizu - 2016 - Convergence guarantees for kernel-based quadrature rules in misspecified settings.pdf:pdf},
pages = {3288--3296},
title = {{Convergence guarantees for kernel-based quadrature rules in misspecified settings}},
year = {2016}
}

@article{Wynne2020,
abstract = {Gaussian processes are ubiquitous in machine learning, statistics, and applied mathematics. They provide a flexible modelling framework for approximating functions, whilst simultaneously quantifying uncertainty. However, this is only true when the model is well-specified, which is often not the case in practice. In this paper, we study the properties of Gaussian process means when the smoothness of the model and the likelihood function are misspecified. In this setting, an important theoretical question of practial relevance is how accurate the Gaussian process approximations will be given the difficulty of the problem, our model and the extent of the misspecification. The answer to this problem is particularly useful since it can inform our choice of model and experimental design. In particular, we describe how the experimental design and choice of kernel and kernel hyperparameters can be adapted to alleviate model misspecification.},
author = {Wynne, G. and Briol, F.-X. and Girolami, M.},
file = {:Users/fxbriol/Docs/Work/Papers/Kernel Methods/Gaussian Processes/(2020, Wynne) Convergence guarantees for Gaussian process means with misspecified likelihoods and smoothness.pdf:pdf},
journal = {Journal of Machine Learning Research},
number = {123},
pages = {1--40},
title = {{Convergence guarantees for Gaussian process means with misspecified likelihoods and smoothness}},
volume = {22},
year = {2021}
}

@inproceedings{Kanagawa2019,
abstract = {Adaptive Bayesian quadrature (ABQ) is a powerful approach to numerical integration that empirically compares favorably with Monte Carlo integration on problems of medium dimensionality (where non-adaptive quadrature is not competitive). Its key ingredient is an acquisition function that changes as a function of previously collected values of the integrand. While this adaptivity appears to be empirically powerful, it complicates analysis. Consequently, there are no theoretical guarantees so far for this class of methods. In this work, for a broad class of adaptive Bayesian quadrature methods, we prove consistency, deriving non-tight but informative convergence rates. To do so we introduce a new concept we call weak adaptivity. In guaranteeing consistency of ABQ, weak adaptivity is notionally similar to the ideas of detailed balance and ergodicity in Markov Chain Monte Carlo methods, which allow sufficient conditions for consistency of MCMC. Likewise, our results identify a large and flexible class of adaptive Bayesian quadrature rules as consistent, within which practitioners can develop empirically efficient methods.},
archivePrefix = {arXiv},
arxivId = {1905.10271},
author = {Kanagawa, M. and Hennig, P.},
booktitle = {Advances in Neural Information Processing Systems},
eprint = {1905.10271},
file = {:Users/fxbriol/Docs/Work/Papers/Probabilistic Numerics/(2019, Kanagawa) Convergence Guarantees for Adaptive Bayesian Quadrature Methods.pdf:pdf},
pages = {6237--6248},
title = {{Convergence guarantees for adaptive Bayesian quadrature methods}},
year = {2019}
}

@article{benson2021bayesian,
  title={Bayesian inference, model selection and likelihood estimation using fast rejection sampling: the {C}onway-{M}axwell-{P}oisson distribution},
  author={Benson, A. and Friel, N.},
  journal={Bayesian Analysis},
  volume={16},
  number={3},
  pages={905--931},
  year={2021},
  publisher={International Society for Bayesian Analysis}
}

@inproceedings{Chen2010,
abstract = {We extend the herding algorithm to continuous spaces by using the kernel\ntrick. The resulting "kernel herding" algorithm is an infinite memory\ndeterministic process that learns to approximate a PDF with a collection of\nsamples. We show that kernel herding decreases the error of expectations of\nfunctions in the Hilbert space at a rate O(1/T) which is much faster than the\nusual O(1/pT) for iid random samples. We illustrate kernel herding by\napproximating Bayesian predictive distributions.},
author = {Chen, Y. and Welling, M. and Smola, A.},
booktitle = {Proceedings of the Conference on Uncertainty in Artificial Intelligence},
file = {:Users/fxbriol/Library/Application Support/Mendeley Desktop/Downloaded/Chen, Welling, Smola - 2010 - Super-samples from kernel herding.pdf:pdf},
pages = {109--116},
title = {{Super-samples from kernel herding}},
year = {2010}
}

@inproceedings{Chen2018,
abstract = {An important task in computational statistics and machine learning is to approximate a posterior distribution $p(x)$ with an empirical measure supported on a set of representative points $\{x_i\}_{i=1}^n$. This paper focuses on methods where the selection of points is essentially deterministic, with an emphasis on achieving accurate approximation when $n$ is small. To this end, we present `Stein Points'. The idea is to exploit either a greedy or a conditional gradient method to iteratively minimise a kernel Stein discrepancy between the empirical measure and $p(x)$. Our empirical results demonstrate that Stein Points enable accurate approximation of the posterior at modest computational cost. In addition, theoretical results are provided to establish convergence of the method.},
author = {Chen, W. Y. and Mackey, L. and Gorham, J. and Briol, F-X. and Oates, C. J.},
booktitle = {Proceedings of the International Conference on Machine Learning, PMLR 80},
file = {:Users/fxbriol/Library/Application Support/Mendeley Desktop/Downloaded/Chen et al. - 2018 - Stein points(2).pdf:pdf},
pages = {843--852},
title = {{Stein points}},
year = {2018}
}

@article{shi2022gradient,
  title={Gradient estimation with discrete stein operators},
  author={Shi, J. and Zhou, Y. and Hwang, J. and Titsias, M. and Mackey, L.},
  journal={Advances in Neural Information Processing Systems},
  volume={35},
  pages={25829--25841},
  year={2022}
}

@article{liu2018stein,
  title={Stein variational gradient descent as moment matching},
  author={Liu, Q. and Wang, D.},
  journal={Advances in Neural Information Processing Systems},
  volume={31},
  year={2018}
}

@article{dwivedi2024kernel,
  title={Kernel thinning},
  author={Dwivedi, R. and Mackey, L.},
  journal={Journal of Machine Learning Research},
  volume={25},
  number={152},
  pages={1--77},
  year={2024}
}

@article{moores2020scalable,
  title={Scalable {Bayesian} inference for the inverse temperature of a hidden Potts model},
  author={Moores, M. and Nicholls, G. K. and Pettitt, A. N. and Mengersen, K.},
  journal={Bayesian Analysis},
  volume={15},
  number={1},
  pages={1--27},
  year={2020},
  publisher={Carnegie Mellon University}
}

@book{silverman1972special,
  title={Special functions and their applications},
  author={Silverman, R. A. and Lebedev, N. N.},
  year={1972},
  publisher={Courier Corporation}
}

@article{bell1934exponential,
  title={Exponential polynomials},
  author={Bell, E. T.},
  journal={Annals of Mathematics},
  volume={35},
  number={2},
  pages={258--277},
  year={1934},
  publisher={JSTOR}
}

@book{comtet2012advanced,
  title={Advanced Combinatorics: The art of finite and infinite expansions},
  author={Comtet, L.},
  year={2012},
  publisher={Springer Science \& Business Media}
}

@article{touchard1939cycles,
  title={Sur les cycles des substitutions},
  author={Touchard, J.},
  journal={Acta Mathematica},
  year={1939}
}

@inproceedings{yang2018goodness,
  title={Goodness-of-fit testing for discrete distributions via {Stein} discrepancy},
  author={Yang, J. and Liu, Q. and Rao, V. and Neville, J.},
  booktitle={International Conference on Machine Learning},
  pages={5561--5570},
  year={2018},
  organization={PMLR}
}

@inproceedings{swersky2020amortized,
  title={Amortized {Bayesian} optimization over discrete spaces},
  author={Swersky, K. and Rubanova, Y. and Dohan, D. and Murphy, K.},
  booktitle={Conference on Uncertainty in Artificial Intelligence},
  pages={769--778},
  year={2020},
  organization={PMLR}
}

@article{daulton2022bayesian,
  title={Bayesian optimization over discrete and mixed spaces via probabilistic reparameterization},
  author={Daulton, S. and Wan, X. and Eriksson, D. and Balandat, M. and Osborne, M. A. and Bakshy, E.},
  journal={Advances in Neural Information Processing Systems},
  volume={35},
  pages={12760--12774},
  year={2022}
}

@inproceedings{gessner2020active,
  title={Active multi-information source {Bayesian} quadrature},
  author={Gessner, A. and Gonzalez, J. and Mahsereci, M.},
  booktitle={Uncertainty in Artificial Intelligence},
  pages={712--721},
  year={2020},
  organization={PMLR}
}

@article{gunter2014sampling,
  title={Sampling for inference in probabilistic models with fast {Bayesian} quadrature},
  author={Gunter, T. and Osborne, M. A. and Garnett, R. and Hennig, P. and Roberts, S. J.},
  journal={Advances in Neural Information Processing Systems},
  volume={27},
  year={2014}
}

@inproceedings{xi2018bayesian,
  title={Bayesian quadrature for multiple related integrals},
  author={Xi, X. and Briol, F.-X. and Girolami, M.},
  booktitle={International Conference on Machine Learning},
  pages={5373--5382},
  year={2018},
  organization={PMLR}
}

@inproceedings{paul2018alternating,
  title={Alternating optimisation and quadrature for robust control},
  author={Paul, S. and Chatzilygeroudis, K. and Ciosek, K. and Mouret, J.-B. and Osborne, M. and Whiteson, S.},
  booktitle={Proceedings of the AAAI Conference on Artificial Intelligence},
  volume={32},
  number={1},
  year={2018}
}

@article{kanagawa2020convergence,
  title={Convergence analysis of deterministic kernel-based quadrature rules in misspecified settings},
  author={Kanagawa, M. and Sriperumbudur, B. K. and Fukumizu, K.},
  journal={Foundations of Computational Mathematics},
  volume={20},
  number={1},
  pages={155--194},
  year={2020},
  publisher={Springer}
}

@article{osborne2012active,
  title={Active learning of model evidence using {Bayesian} quadrature},
  author={Osborne, M. and Garnett, R. and Ghahramani, Z. and Duvenaud, D. K. and Roberts, S. J. and Rasmussen, C.},
  journal={Advances in Neural Information Processing Systems},
  volume={25},
  year={2012}
}

@incollection{gautschi1981survey,
  title={A survey of {Gauss-Christoffel} quadrature formulae},
  author={Gautschi, W.},
  booktitle={EB Christoffel: The influence of his work on mathematics and the physical sciences},
  pages={72--147},
  year={1981},
  publisher={Springer}
}

@article{chen2025stationary,
  title={Stationary {MMD} Points for Cubature},
  author={Chen, Z. and Karvonen, T. and Kanagawa, H. and Briol, F.-X. and Oates, C. J. and others},
  journal={arXiv preprint arXiv:2505.20754},
  year={2025}
}

@inproceedings{chen2024conditional,
  title={Conditional {Bayesian} quadrature},
  author={Chen, Z. and Naslidnyk, M. and Gretton, A. and Briol, F.-X.},
  booktitle={Proceedings of the Fortieth Conference on Uncertainty in Artificial Intelligence},
  pages={648--684},
  year={2024}
}

@inproceedings{li2023multilevel,
  title={Multilevel {Bayesian} quadrature},
  author={Li, K. and Giles, D. and Karvonen, T. and Guillas, S. and Briol, F.-X.},
  booktitle={International Conference on Artificial Intelligence and Statistics},
  pages={1845--1868},
  year={2023},
  organization={PMLR}
}

@article{bardenet2020monte,
  title={{Monte Carlo} with determinantal point processes},
  author={Bardenet, R. and Hardy, A.},
  journal={The Annals of Applied Probability},
  year={2020}
}

@article{dick2011higher,
  title={HIGHER ORDER SCRAMBLED DIGITAL NETS ACHIEVE THE OPTIMAL RATE OF THE ROOT MEAN SQUARE ERROR FOR SMOOTH INTEGRANDS},
  author={Dick, J.},
  journal={The Annals of Statistics},
  pages={1372--1398},
  year={2011},
  publisher={JSTOR}
}

@article{chopin2024higher,
  title={Higher-order {Monte Carlo} through cubic stratification},
  author={Chopin, N. and Gerber, M.},
  journal={SIAM Journal on Numerical Analysis},
  volume={62},
  number={1},
  pages={229--247},
  year={2024},
  publisher={SIAM}
}

@book{novak2006deterministic,
  title={Deterministic and stochastic error bounds in numerical analysis},
  author={Novak, E.},
  volume={1349},
  year={2006},
  publisher={Springer}
}

@inproceedings{murray2006mcmc,
  title={{MCMC} for doubly-intractable distributions},
  author={Murray, I. and Ghahramani, Z. and MacKay, D. J. C.},
  booktitle={Proceedings of the 22nd Annual Conference on Uncertainty in Artificial Intelligence (UAI-06)},
  pages={359--366},
  year={2006},
  organization={AUAI Press}
}

@book{hennig2022probabilistic,
  title={Probabilistic Numerics: Computation as Machine Learning},
  author={Hennig, P. and Osborne, M. A. and Kersting, H. P.},
  year={2022},
  publisher={Cambridge University Press}
}

@article{amin2023biological,
  title={Biological sequence kernels with guaranteed flexibility},
  author={Amin, A. N. and Weinstein, E. N. and Marks, D. S.},
  journal={arXiv preprint arXiv:2304.03775},
  year={2023}
}

@article{jorgensen2015discrete,
  title={Discrete reproducing kernel {Hilbert} spaces: sampling and distribution of Dirac-masses},
  author={Jorgensen, P. and Tian, F.},
  journal={The Journal of Machine Learning Research},
  volume={16},
  number={1},
  pages={3079--3114},
  year={2015},
  publisher={JMLR. org}
}

@article{robins2007recent,
  title={Recent developments in exponential random graph (p*) models for social networks},
  author={Robins, G. and Snijders, T. and Wang, P. and Handcock, M. and Pattison, P.},
  journal={Social Networks},
  volume={29},
  number={2},
  pages={192--215},
  year={2007},
  publisher={Elsevier}
}

@book{moller2003statistical,
  title={Statistical Inference and Simulation for Spatial Point Processes},
  author={Moller, J. and Waagepetersen, R. P.},
  year={2003},
  publisher={CRC Press}
}

@article{balakrishnan2011learning,
  title={Learning generative models for protein fold families},
  author={Balakrishnan, S. and Kamisetty, H. and Carbonell, J. G. and Lee, S.-I. and Langmead, C. J.},
  journal={Proteins: Structure, Function, and Bioinformatics},
  volume={79},
  number={4},
  pages={1061--1078},
  year={2011},
  publisher={Wiley Online Library}
}

@article{briol2019probabilistic,
  title={Probabilistic integration: A role in statistical computation? (with discussion)},
  author={Briol, F.-X. and Oates, C. J. and Girolami, M. and Osborne, M. A. and Sejdinovic, D.},
  journal={Statistical Science},
  volume={34},
  number={1},
  pages={1--22},
  year={2019},
  publisher={JSTOR}
}

@inproceedings{chennested,
  title={Nested Expectations with Kernel quadrature},
  author={Chen, Z. and Naslidnyk, M. and Briol, F.-X.},
  booktitle={Forty-second International Conference on Machine Learning},
  year={2025}
}

@article{aronszajn1950theory, 
title={Theory of reproducing kernels}, 
author={Aronszajn, N.}, 
journal={Transactions of the American Mathematical Society}, 
volume={68}, number={3}, pages={337--404}, year={1950}}

\newpage

\begin{appendices}

\crefalias{section}{appendix}
\crefalias{subsection}{appendix}
\crefalias{subsubsection}{appendix}

\setcounter{equation}{0}
\renewcommand{\theequation}{\thesection.\arabic{equation}}
\newcommand{\appsection}[1]{
  \refstepcounter{section}
  \section*{Appendix \thesection: #1}
  \addcontentsline{toc}{section}{Appendix \thesection: #1}
}

\onecolumn

{\hrule height 1mm}
\vspace*{-0pt}
\section*{\LARGE\bf \centering Supplementary Material
}
\vspace{8pt}
{\hrule height 0.1mm}
\vspace{24pt}

\section{Detailed on Stein reproducing kernel over the discrete domain}\label{sec:details}
Let $\calX$ be a finite discrete domain with an ordering $x^{[1]}, x^{[2]}, \ldots, x^{[|\calX|]}$.
Define the cyclic permutation operator $\neg:\calX\to\calX$ and its inverse operator $\rneg:\calX\to\calX$ by $\neg x^{[i]}=x^{[(i+1) \bmod |\calX|]}$ and $\rneg x^{[i]}=x^{[(i-1) \bmod |\calX|]}$ for any $i=1,2, \ldots,|\calX|$.  It follows immediately that these operators are inverses of each other, i.e. $\neg(\rneg x)=\rneg(\neg x)=x$ for any $x \in \calX$. 

Given a cyclic permutation $\neg$ on $\calX$, for any vector $x=[x_1, \ldots, x_h]^{\top} \in \calX^h$, define $\neg_i$ as a cyclic permutation of the $i$-th location $\neg_i x:= [x_1, \ldots, x_{i-1}, \neg x_i, x_{i+1}, \ldots, x_h]^{\top}$. For any function $f: \calX^h \rightarrow \mathbb{R}$, define the partial difference operator as $\Delta_{x_i} f(x):=f(x)-f(\neg_i x), \quad i=1, \ldots, d$, and write $\Delta_x f(x)=[\Delta_{x_1} f(x), \ldots, \Delta_{x_h} f(x)]^{\top} \in\R^h$. 
The difference score function is defined as $s_p(x) := \frac{\Delta_x f(x)}{f(x)}$. 
Similarly, for the inverse cyclic permutation $\rneg$, 
define the associated partial difference operator as: $\Delta_{x_i}^\ast f(x):=f(x)-f\left(\rneg_i x\right), \quad i=1, \ldots, h$, 
and write $\Delta^\ast_x f(x) = [\Delta^\ast_{x_1} f(x), \ldots, \Delta^\ast_{x_h} f(x)]^{\top}\in\R^d$.
The superscript $\ast$ arises because $\Delta_x^\ast$ is the adjoint operator of $\Delta_x$ with respect to the standard $\ell_2$ inner product over $\calX^h$. 

\section{Derivations of Closed Form Kernel Mean Embeddings}\label{sec:derivation}
In this section, we provide detailed derivations of all the closed form kernel mean embeddings in \Cref{tab:discrete_kme}. 

\paragraph{Poisson distribution and Brownian motion kernel} 
Consider a Poisson distribution whose probability mass function $p(x) = \frac{\eta^x e^{-\eta}}{x!}$ for any $x\in\N_0$, and a Brownian motion kernel $k(x, y) = x\wedge y$. 
The kernel mean embedding can be computed as: for any $y\in\N_0$, 
\begin{align*}
\mu_\Pb(y) 
&= \sum_{x=0}^{\infty} \min(x, y) \cdot \frac{e^{-\eta} \eta^x}{x!} \\
&= \sum_{x=0}^{y} x \cdot \frac{e^{-\eta} \eta^x}{x!} 
+ \sum_{x=y+1}^{\infty} y \cdot \frac{e^{-\eta} \eta^x}{x!} \\
&= \sum_{x=0}^{y} x \cdot \frac{e^{-\eta} \eta^x}{x!} 
+ y \cdot \sum_{x=y+1}^{\infty} \frac{e^{-\eta} \eta^x}{x!} \\
&= \sum_{x=0}^{y} x \cdot \frac{e^{-\eta} \eta^x}{x!} 
+ y \cdot P(X > y) \\
&= \sum_{x=0}^{y} x \cdot \frac{e^{-\eta} \eta^x}{x!} 
+ y \cdot \left(1 - Q(y + 1, \eta) \right),
\end{align*}
where $Q$ is the regularized gamma function and it admits efficient computation in python \texttt{scipy} package. 

The initial error can be computed as:
\begin{align*}
\E_{X,X'\sim \Pb}[k(X,X')] &= \sum_{m=0}^{\infty} \sum_{n=0}^{\infty} \min(m, n) \cdot P(X = m) \cdot P(y = n) \\
&= \sum_{r=0}^{\infty} P(\min(X, y) > r) = \sum_{r=0}^{\infty} P(X > r \text{ and } y > r) \\
&= \sum_{r=0}^{\infty}  P(X > r)^2 = \sum_{r=0}^{\infty} \left( 1 - Q(r+1, \eta) \right)^2 ,
\end{align*}
which would involve an infinite sum over regularized gamma function and does not admit a closed form expression. In the experiments, we truncate the above infinite sum with first $100$ terms. 

\paragraph{Negative binomial distribution and Brownian motion kernel}
Consider a negative binomial distribution $\text{NB}(\tau,q)$ whose probability mass function $P(X = x) = \binom{x+\tau-1}{x}(1-q)^x q^\tau$ for any $x\in\N_0$, and a Brownian motion kernel $k(x, y) = x\wedge y$. 
The kernel mean embedding can be computed as: for any $y\in\N_0$, 
\begin{align*}
    \mu_\Pb(y) &= \sum_{x=0}^{\infty} \min(x, y) \cdot P(X = x) = \sum_{k=1}^y P(X \geq k)  = y - \sum_{k=1}^y P(X \leq k - 1) = y - \sum_{k=0}^{y-1} I_p(\tau,k+1) .
\end{align*}
Here, $I_p$ denotes regularized incomplete Beta function and it admits an efficient computation in python \texttt{scipy} package. 
The initial error can be computed as:
\begin{align*}
    \E_{X,X'\sim \Pb}[k(X,X')] &= \sum_{m=0}^{\infty}\left(1-I_p(\tau, m+1)\right)^2 .
\end{align*}
which unfortunately is another infinite sum that does not admit a closed form expression. 
In the experiments, we truncate the above infinite sum with the first $100$ terms. 

\paragraph{Logarithmic distribution and Brownian motion kernel}
Consider a logarithmic distribution $\text{Log}(q)$ whose probability mass function is given by $
P(X = x) = -\frac{1}{\ln(1-q)} \cdot \frac{q^x}{x}$ for any $x\in\N_+$, and a Brownian motion kernel $k(x, y) = \min(x,y)$. 
The kernel mean embedding can be computed as: for any $y \in \N$, 
\begin{align*}
    \mu_\Pb(y) &= \sum_{x=1}^{\infty} \min(x, y) \cdot P(X = x) 
    = \sum_{x=1}^y x \cdot P(X=x)+y \sum_{x=y+1}^{\infty} P(X=x) \\
    &= -\frac{1}{\ln (1-q)}\left(\sum_{x=1}^y q^x+y \sum_{x=y+1}^{\infty} \frac{q^x}{x}\right) = -\frac{1}{\ln (1-q)}\left(\sum_{x=1}^y q^x-y \ln (1-q)-y \sum_{x=1}^y \frac{q^x}{x}\right) \\
    &= y+\frac{1}{\ln (1-q)} \sum_{n=1}^{y-1} \frac{(y-n) q^n}{n} .
\end{align*}
Unfortunately, the initial error $\E_{X,X'\sim \Pb}[k(X,X')]$ does not admit a closed form expression.

\paragraph{Poisson distribution and polynomial kernel}
Consider a Poisson distribution whose probability mass function $p(x) = \frac{\eta^x e^{-\eta}}{x!}$ for any $x\in\N_0$, and a polynomial kernel $k(x, y) = (x y + 1)^r$. 
The kernel mean embedding can be computed as: for any $y\in\N_0$, 
\begin{align*}
    \mu_{\Pb}(y)=\sum_{n=0}^r \binom{r}{n} y^n \E[X^n] = \sum_{n=0}^r \binom{r}{n} y^n T_n(\eta) .
\end{align*}
where $T_n(\eta)$ is the Touchard polynomial~\citep{touchard1939cycles} and it admits an efficient computation via dynamical programming~\citep{comtet2012advanced}. The initial error can be computed as $\E_{X,X'\sim \Pb}[k(X,X')] = \sum_{n=0}^r \binom{r}{n} T_n(\eta)^2$.

\paragraph{Negative binomial distribution and polynomial kernel}
Consider a negative binomial distribution $\text{NB}(\tau,q)$ whose probability mass function $P(X=x) = \binom{x+\tau-1}{x}(1-q)^x q^\tau$ for any $x\in\N_0$, and a polynomial kernel $k(x, y) = (x y + 1)^r$. 
The kernel mean embedding can be computed as: for any $y\in\N_0$, 
\begin{align*}
    \mu_{\Pb}(y)=\sum_{n=0}^r \binom{r}{n} y^n \E[X^n] = \sum_{n=0}^r \binom{r}{n} y^n M_n(\tau,q) ,
\end{align*}
where $M_n(\tau, q)=\sum_{j=0}^n S(n, j)j!\binom{\tau+j-1}{j}(\frac{1-q}{q})^j$ with $S(n,j)$ being the Stirling numbers~\citep{comtet2012advanced}. 
The initial error can be computed as $\E_{X,X'\sim \Pb}[k(X,X')] = \sum_{n=0}^r \binom{r}{n} M_n(\tau,q)^2$. 

\paragraph{Skellam distribution and polynomial kernel}
Consider a Skellam distribution whose probability mass function $P(X=x) = e^{-(\lambda_1+\lambda_2)}(\frac{\lambda_1}{\lambda_2})^{x / 2} I_{|x|}(2 \sqrt{\lambda_1 \lambda_2}) $ for any $x\in\N_0$, and a polynomial kernel $k(x, y) = (x y + 1)^r$. 
The kernel mean embedding can be computed as: for any $y\in\N_0$, 
\begin{align*}
    \mu_{\Pb}(y)=\sum_{n=0}^r \binom{r}{n} y^n \E[X^n] = \sum_{n=0}^r \binom{r}{n} y^n B_n(\lambda_1 - \lambda_2, \ldots, \lambda_1 + (-1)^n \lambda_2) ,
\end{align*}
where $B_n$ are the complete exponential Bell polynomials~\citep{bell1934exponential}.  
The initial error can be computed as $\E_{X,X'\sim \Pb}[k(X,X')] = \sum_{n=0}^r \binom{r}{n} B_n(\lambda_1 - \lambda_2, \ldots, \lambda_1 + (-1)^n \lambda_2)^2$. 

\paragraph{Uniform distribution over $\{-1, +1\}^d$ and exponential Hamming distance kernel}
Consider a uniform distribution over all possible states $\{-1, +1\}^d$ of an Ising model, and an exponential Hamming distance kernel $k(x, y) = \exp(- \lambda d_H(x,y))$ with a lengthscale $\lambda > 0$. 
In this particular domain of $\{-1, +1\}^d$, the exponential Hamming distance kernel can be equivalently written as $k(x, y) = \exp(- \frac{\lambda}{2}(d - \sum_{i=1}^d x_i y_i))$. As a result, the kernel mean embedding can be computed as follows:
\begin{align*}
    \mu_\Pb(y) &= \frac{1}{2^d} \sum_{x\in\{-1, +1\}^d} \left[ \exp\left(-\frac{\lambda d}{2}\right) \cdot \exp\left( \frac{\lambda}{2} \sum_{i=1}^d x_i y_i \right) \right] \\
    &= \frac{1}{2^d} \cdot \exp\left(-\frac{\lambda d}{2}\right) \sum_{x\in\{-1, +1\}^d} \prod_{j=1}^d \exp\left( \frac{\lambda}{2} x_j y_j \right) \\
    &= \frac{1}{2^d} \cdot \exp\left(-\frac{\lambda d}{2}\right) \prod_{j=1}^d \sum_{x_j \in \{-1,1\}} \exp\left( \frac{\lambda}{2} x_j y_j \right) \\
    &= \exp\left(-\frac{\lambda d}{2}\right) \prod_{j=1}^d \cosh\left( \frac{\lambda y_j}{2} \right) \\
    &= \exp\left(-\frac{\lambda d}{2}\right) \left[ \cosh\left( \frac{\lambda}{2} \right) \right]^d . 
\end{align*}
Since the kernel mean embedding is a constant function, the initial error is the same as the kernel mean embedding, i.e. $\E_{X,X'\sim\Pb}[k(X,X')] = \exp(-\frac{\lambda d}{2}) [ \cosh(\frac{\lambda}{2})]^d$.

\paragraph{Uniform distribution over $\{0, +1\}^d$ and Tanimoto kernel} 
Consider a uniform distribution over all possible states $\{0, +1\}^d$ of an Ising model, and a Tanimoto kernel $k(x, y) = \frac{x^\top y}{\|x\|^2+\|y\|^2-x^\top y}$. The kernel mean embedding can be computed as follows:
\begin{align}
    \mu_\Pb(y) &= \frac{1}{2^d} \sum_{x\in\{0, +1\}^d} \frac{x^\top y}{\|x\|^2 + \|y\|^2 - x^\top y} \nonumber \\
    &= \frac{1}{2^d} \sum_{s=0}^d \sum_{\substack{x: \|x\|^2 = s}} \frac{x^\top y}{s + \|y\|^2 - x^\top y} \nonumber \\
    &= \frac{1}{2^d} \sum_{s=0}^d \sum_{a = \max(0, s - d + \|y\|^2)}^{\min(s, \|y\|^2)} \left[ \sum_{\substack{\|x\|^2 = s \\ x^\top y = a}} \frac{a}{s + \|y\|^2 - a} \right] \nonumber \\
    &= \frac{1}{2^d} \sum_{s=0}^d \sum_{a = \max(0, s - d + \|y\|^2)}^{\min(s, \|y\|^2)} \left[ \frac{a}{s + \|y\|^2 - a} \cdot \binom{\|y\|^2}{a} \cdot \binom{d - \|y\|^2}{s - a} \right] \label{eq:kme_tanimoto}. 
\end{align}
Then, the initial error would be
\begin{align*}
    \E_{X,X'\sim\Pb}[k(X,X')] = \frac{1}{2^d} \sum_{y\in\{0, +1\}^d}\mu_\Pb(y) = \frac{1}{2^{2d}} \sum_{t=0}^d \sum_{s=0}^d \sum_{a = \max(0, s - d + t)}^{\min(s, t)} \left[ \frac{a}{s + t - a} \cdot \binom{d}{t} \cdot \binom{t}{a} \cdot \binom{d - t}{s - a} \right] . 
\end{align*}
The total number of terms to sum over is $\calO(d)$ for the kernel mean embedding and $\calO(d^2)$ for the initial error, both of which are substantially cheaper than the brutal force summation of $\calO(2^d)$ terms. 

\paragraph{Uniform distribution over $\{0, 1, 2\}^d$ and exponential Hamming distance kernel} 
Consider a uniform distribution over all possible states $\{0, 1,2 \}^d$ of a Potts model, and an exponential Hamming distance kernel kernel $k(x, y) = \exp(- \lambda d_H(x,y))$ with a lengthscale $\lambda > 0$.  The kernel mean embedding can be computed as follows: 
\begin{align*}
    \mu_\Pb(y)= \frac{1}{3^d} \sum_{x} \exp(-\lambda d_H(x, y)) = \frac{1}{3^d} \sum_{s=0}^d \binom{d}{s} \cdot 2^s \cdot e^{-\lambda s} = \frac{1}{3^d} \sum_{s=0}^d \binom{d}{s} (2e^{-\lambda})^s = \left( \frac{1 + 2e^{-\lambda}}{3}\right)^d . 
\end{align*}
Since the kernel mean embedding is a constant over $y$, the initial error is the same as the kernel mean embedding, i.e. $\E_{X,X'\sim\Pb}[k(X,X')] = ( \frac{1 + 2e^{-\lambda}}{3})^d$. The same derivations can be extended to uniform distribution over $\{0, 1, \ldots, K\}^d$ for any positive integer $K$ with the corresponding kernel mean embedding $\mu_\Pb(y) = ( \frac{1 + Ke^{-\lambda}}{K+1})^d$ and initial error $\E_{X,X'\sim\Pb}[k(X,X')] = ( \frac{1 + Ke^{-\lambda}}{K+1})^d$.

\section{An Example on the Tightness of the Upper Bound in \Cref{thm:main}}\label{sec:example}
The worst-case optimality results cited in the proof of Theorem~\ref{thm:main} state that
\begin{equation} \label{eq:wce-definition}
    \hat{\sigma} = \sup_{\lVert f \rVert_{\mathcal{H}_k} \leq 1} \bigg\lvert \sum_{x \in \mathcal{X}} f(x) p(x) - \sum_{i=1}^N  w_i f(x_i) \bigg\rvert,
\end{equation}
where $w_i$ are the BayesSum weights (see Remark~\ref{rmk:weights}).
Suppose that $\mathcal{X} = \mathbb{N}$ and $k$ is any Matérn kernel.
The RKHS of $k$ on $\mathbb{R}$ is a Sobolev space~\cite{kanagawa2018gaussian}.
Being compactly supported and infinitely differentiable, the bump function $f$ given by $f(x) = \exp(-1/(1-x^2))$ for $\lvert x \vert < 1$ and $f(x) = 0$ otherwise is in every Sobolev space.
Because Matérn kernels are stationary, the norm of the translation $f(\cdot - a)$ does not depend on $a$.
Since $f(\cdot - a)|_{\mathbb{N}} \in \mathcal{H}_k$ for every $a$, by selecting $a = N+1$ and using~\eqref{eq:wce-definition} we obtain
\begin{equation*}
    \hat{\sigma} \geq \frac{1}{\lVert f|_{\mathbb{N}} \rVert_{\mathcal{H}_k}} f(0) p(N+1) = \frac{\exp(-1)}{\lVert f|_{\mathbb{N}} \rVert_{\mathcal{H}_k}} p(N+1) ,
\end{equation*}
where we used the fact that $f(i - (N+1)) = 0$ if $i \neq N+1$.
In combination with Theorem~\ref{thm:main} we thus have
\begin{equation*}
    C_1 p(N+1) \leq \hat{\sigma} \leq C_2 \sum_{i > N} p(i)
\end{equation*}
for all $N \in \mathbb{N}$ and some constants $C_1$ and $C_2$.
If $p(i) = \Theta(i^{-\alpha})$ for $\alpha > 1$, the lower bound is of order $N^{-\alpha}$ and the upper bound of order $N^{-\alpha+1}$.
If $p(i) = \Theta(e^{-ci})$ for $c > 0$, both the lower and upper bound are of order $e^{-cN}$.

\section{Additional experiments and details}\label{sec:additional_experiments}
\begin{figure*}[t]
    \centering
    \begin{subfigure}{0.33\textwidth}
        \centering
        \hspace{-10pt}
        \includegraphics[width=1.0\linewidth]{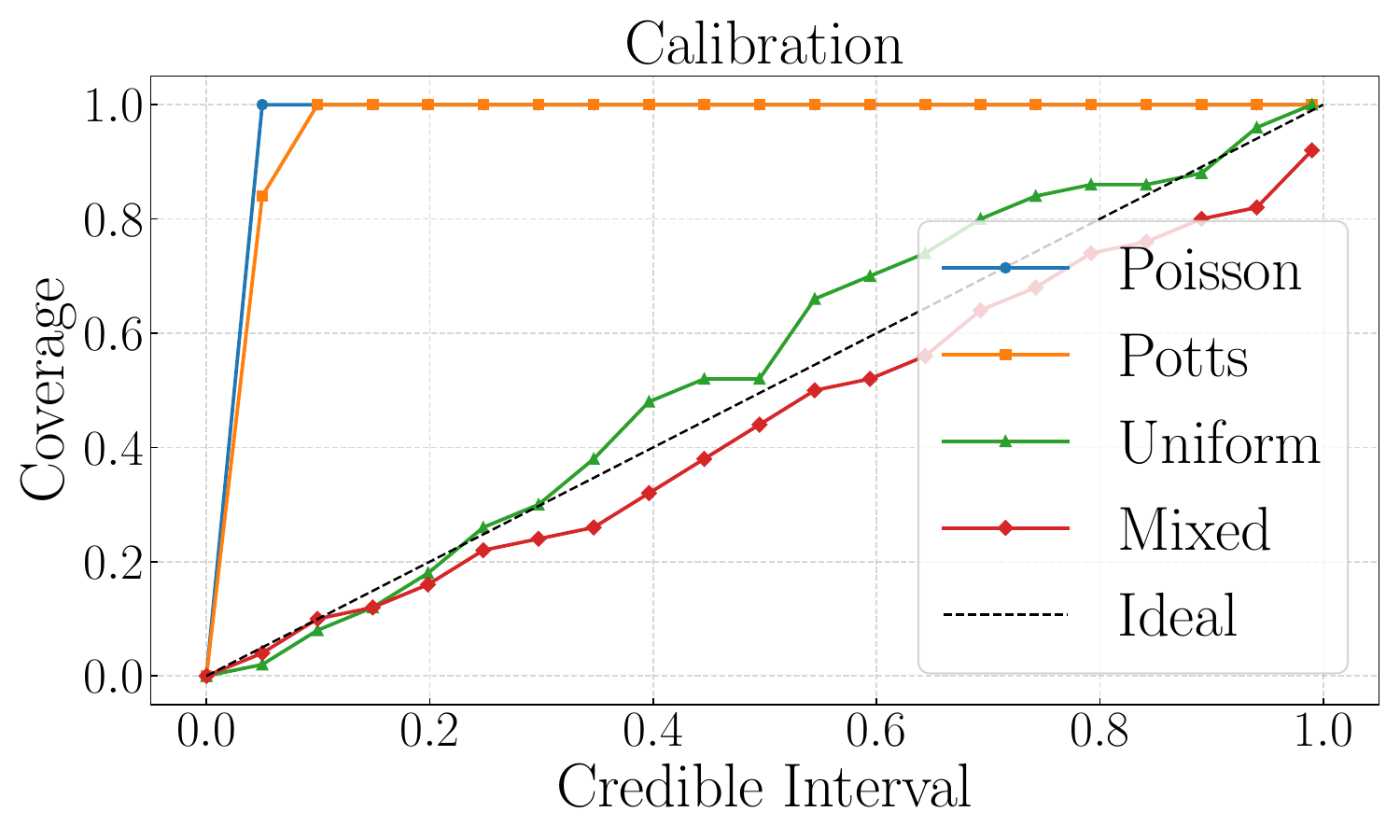}
    \end{subfigure}%
    \hfill 
    \begin{subfigure}{0.33\textwidth}
        \centering
        \hspace{-10pt}
        \includegraphics[width=1.0\linewidth]{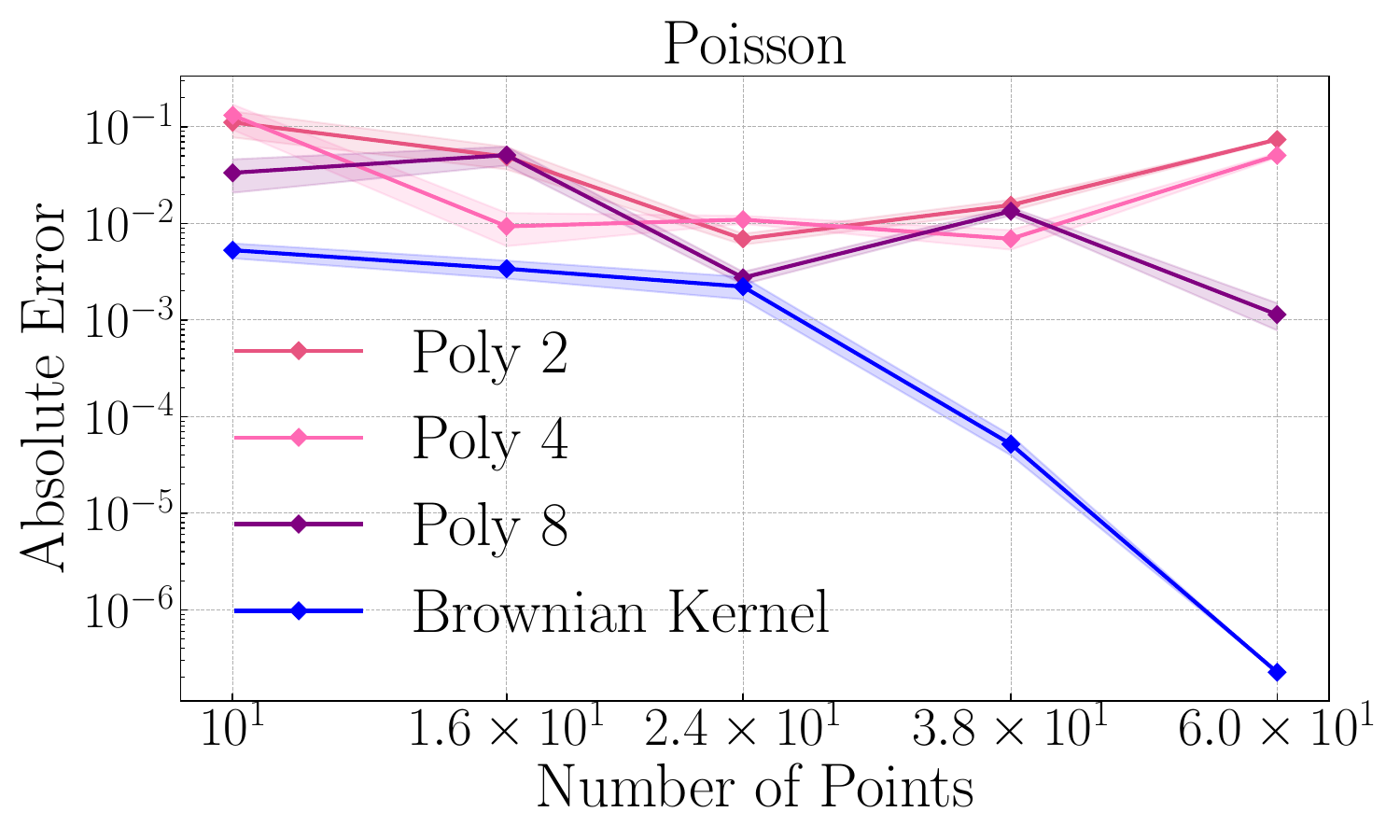}
    \end{subfigure}%
    \hfill 
    \begin{subfigure}{0.33\textwidth}
        \centering
        \hspace{-10pt}
        \includegraphics[width=1.0\linewidth]{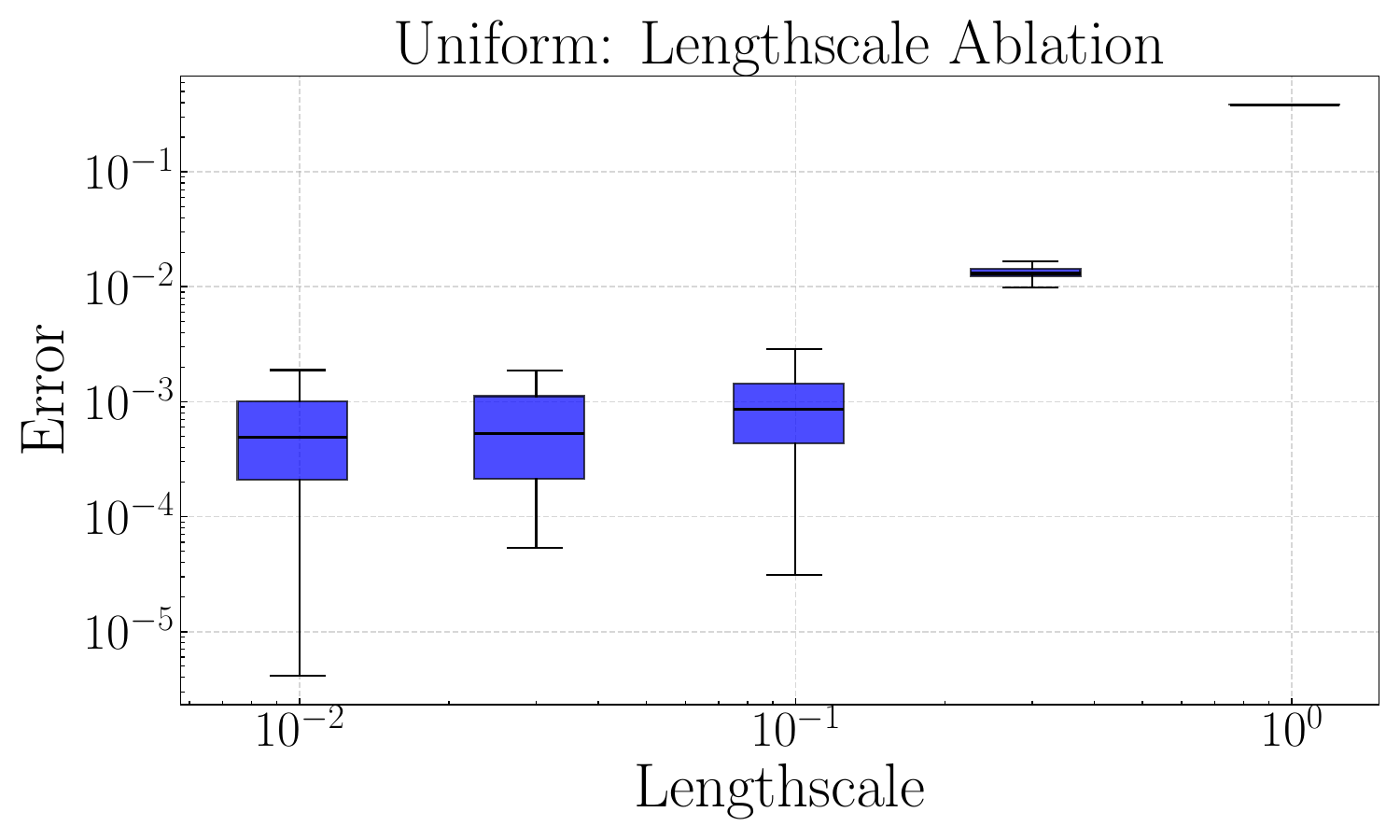}
    \end{subfigure}
    \vspace{-3pt}
    \caption{\textbf{Left}: Calibration of the posterior variance of BayesSum in synthetic settings (a)---(d). \textbf{Middle}: Ablation study on different kernels in synthetic setting (a): polynomial kernels of degree 2, 5, and 8 and Brownian motion kernel. 
    \textbf{Right}: Ablation study of kernel lengthscales of the the exponential Hamming kernel in synthetic setting (b). }
    \label{fig:toy_additional}
\end{figure*}

\textbf{More details on the synthetic settings: } 
Here, we provide details on the baseline methods Monte Carlo (MC), Russian Roulette (RR), importance sampling (IS), and stratified sampling (SS).
\begin{enumerate}[label=(\alph*), itemsep=2.0pt, topsep=2pt, leftmargin=*]
    \item The ground truth is computed by summation over the first 200 terms $n = 0, ... , 200$. For IS we use a negative binomial distribution $\mathrm{NB}(\tau,p)$ as the proposal distribution $\Qb$. 
    We pick $\tau = 5$ and the success probability $p=\eta / (\tau + \eta)$ such that the mean of the proposal distribution matches the mean of the distribution $\Pb=\text{Poisson}(\eta)$. 
    For RR, we draw a random subset $\mathcal{X}^{\prime} \subset \mathcal{X}$ according to a geometric distribution with parameter $\rho_{\star}=0.95$. Hence, the inclusion probability for the element at position $x'\in\calX$ is given by $\rho_\ast^j$. 
    For SS, the set $\calX=\N$ is partitioned into two disjoint regions, $\mathcal{R}_1=\{n \in \mathbb{N}: n<c\}$ and $\mathcal{R}_2=\{n \in \mathbb{N}: n \geq c\}$. The cutoff parameter \(c = 100\). An equal number of $N/2$ samples are drawn independently via inverse cumulative distribution function of each region. 
    
    \item The ground truth is computed by exact summation over the full index set. For IS, we uses a fully factorized proposal distribution $\Qb$ with \( q(x) = \prod_{i=1}^d q_i(x_i) \), where each \(q_i(x_i) \propto \exp(-\beta\, h_i^\top x_i)\) is a categorical distribution. 
    For RR, we first denote $\calI = \{0, 1, \ldots, 3^L\}$ as the index set of all possible sequences in $\calX=\{0, 1,2\}^L$. Hence, each index $j\in\calI$ has a bijective correspondence between the sequence in $\calX$. Next, we draw the random index subset $\mathcal{I}^{\prime} \subset \mathcal{I}$ according to a geometric distribution with parameter $\rho_*=0.95$. Therefore, for each index $j\in\calI$, the inclusion probability is given explicitly by $\rho_\ast^j$. 
    For SS, we partition the index set $\calI$ into four index subsets $\mathcal{R}_1, \mathcal{R}_2, \mathcal{R}_3, \mathcal{R}_4$ of equal sizes. Then we draw $\lfloor N/4 \rfloor$ number of samples uniformly and independently from each subset, which gives $N$ samples in total. 

    \item The ground truth is computed by exact enumeration over all configurations. Samples from the un-normalized Potts model are drawn using either Gibbs sampling or the Metropolis–Hastings algorithm~\citep{gelman1995bayesian}, with a burn-in of 10 iterations and thinning by a factor of 5. 
    Both MC and BayesSum are computed with the same set of samples in order for consistent comparison. 
    
    \item The ground truth admits a closed form expression: 
   \[I=\frac{1}{17^{2}}\sum_{h_{1}\in H}\sum_{h_{2}\in H}\Bigg[-20\,\frac{1 -e^{-0.2\,\alpha L}}{0.2\,\alpha L}\;-\;\Bigg(I_{0}(1)+\frac{2}{\beta L}\sum_{n=1}^{\infty} \frac{I_{n}(1)}{n}\,\sin(n\beta L)\Bigg)+ 20 + e \Bigg],\] \[
   \alpha = 1 + 0.1\,(h_{1} + h_{2}), \qquad \beta = 2\pi\!\left(1 + 0.05\,(h_{1}^{2} + h_{2}^{2})\right), \qquad H = \{\, -1 + 0.125\,j \;:\; j = 0,1,\ldots,16 \,\}\]
   and \(I_n(x)\) denotes the modified Bessel function of the first kind of order.

    Moreover, in this setting, we use the following kernel 
    \begin{align*}
        k\big((x,y),(x',y')\big) = k_\calX(x,x') + k_\calY(y,y') + k_\calX(x,x') \, k_\calY(y,y'), 
    \end{align*}
    where $k_\calX$ depends only on the continuous coordinates and $k_\calY$ depends only on the categorical coordinates.
    In particular, \(k_\calX(x,x')\) is the Gaussian kernel with lengthscale $\ell = 0.8$; and \(k_\calY(y,y')\) is the exponential--Hamming kernel with lengthscale $\lambda = 1.0$. 
    Under the joint distribution $\Pb = \Pb_X \times \Pb_Y$, The kernel mean embedding at $(x',y')$ equals $\mu_\Pb(x',y') = \mu_{\Pb_X}(x') + \mu_{\Pb_Y}(y') + \mu_{\Pb_X}(x') \cdot \mu_{\Pb_Y}(y')$. Here, both kernel mean embeddings $\mu_{\Pb_X}$ and $\mu_{\Pb_Y}$ admit closed form expressions from \citet{briol2025dictionary} and our \Cref{tab:discrete_kme}. 
\end{enumerate}

\textbf{Additional experimental results:}  
In \Cref{fig:toy_additional} (\textbf{Left}), we report the calibration of the BayesSum posterior variance. Here, coverage refers to the proportion of times that a confidence interval contains the true value across repeated experiments. 
The black diagonal line denotes perfect calibration, while curves above or below this line indicate underconfidence or overconfidence, respectively. 
We observe that the posterior variances are well-calibrated for the uniform distribution over $\{0,1,2\}^d$ in (b) and the distribution over mixed domain in (d), whereas the posterior variance is underconfident for the Poisson distribution in (a) and the Potts distribution in (c). 
This behavior arises because the Brownian motion kernel $k(x, y) = A(\min(x,y))$ in (a) and the Stein kernel in (c) are both unbounded, which leads to inflated posterior variances even when the kernel amplitude $A$ is carefully selected. 

In \Cref{fig:toy_additional} (\textbf{Middle}), we run an ablation study on the choice of kernels in case (a) of the synthetic setting using non-repetitive samples. The Brownian motion kernel achieves lower error compared to polynomial kernels since the integrand function is not an polynomial function. 
In the mean time, the performance of polynomial kernels improve as the degree of the polynomial increases from $2$ to $8$. 
This is expected since the higher-degree polynomial kernels can better approximate the integrand function.
In \Cref{fig:toy_additional} (\textbf{Right}), we run an ablation study on the choice of lengthscales in the exponential Hamming distance in case (b) of the synthetic setting. 
The performance are similar for lengthscales $0.01, 0.03, 0.1$ whereas larger lengthscales $0.3, 1.0$ achieve worse performances. This highlights the importance of lengthscale selection in BayesSum. 
In the experiments in the main, we optimize the kernel lengthscales via maximizing the log marginal likelihood, which gives a lengthscale of $0.01$, falling within the optimal range as shown in the ablation study as desired. 

In \Cref{fig:iid_repetitive}, we compare the absolute integration error of BayesSum under two sampling schemes:
(i) points drawn i.i.d. from a distribution with replacement; and
(ii) points drawn i.i.d. from the same distribution without replacement.
This distinction is unique to discrete domains, since repeated samples occur with probability zero in continuous spaces.
As illustrated in \Cref{fig:iid_repetitive}(\textbf{Left}) and (\textbf{Right}), using unique (non-duplicated) points consistently yields lower error. This is expected: repeated evaluation points provide no additional information to the BayesSum estimator and therefore do not improve its accuracy.
In \Cref{fig:iid_repetitive}(\textbf{Middle}), the two curves are nearly identical. This is because the state space has cardinality $3^{15}$
; when drawing samples i.i.d. from the uniform distribution over 
$\calX=\{0,1,2\}^{15}$, duplicates almost never occur in practice. Consequently, the i.i.d. and unique-sample schemes achieve indistinguishable performance. 

\textbf{More details on the unnormalised models:} 
Here, we provide additional details for the two unnormalised models: (a) the Conway–Maxwell–Poisson (CMP) model for count data, and (b) the Potts model for sequence data.
\begin{enumerate}[label=(\alph*), itemsep=2.0pt, topsep=2pt, leftmargin=*]
    \item The initial parameters are set to $(\theta_1, \theta_2) = (0.5, 1.2)$. BayesSum employs a Brownian motion kernel with amplitude~$1$. In \Cref{fig:trajectory}, the optimal parameter location is obtained by computing the log-likelihood over the entire grid $[0.0, 3.0] \times (0.0, 0.8]$ and selecting the point that achieves the maximum value.

    \item The initial parameters $\boldsymbol{\theta} = (\mathbf{h}, \mathbf{J})$ are drawn independently from a normal distribution $\mathcal{N}(0,\,0.01^2)$ at each coordinate. BayesSum employs an exponential Hamming kernel \(k(x,y) = \exp\bigl(-\lambda\, H(x,y)\bigr),\) with lengthscale $\lambda = 0.1104$ and amplitude $1$. 
    The discrete KSD values in \Cref{fig:potts_protein} is based on the discrete difference Stein operator of \citet{yang2018goodness}: the cyclic permutation on the single-site state space $\{0,1,2\}$ is defined as $s \mapsto (s+1) \bmod 3$ (i.e.\ $0 \to 1 \to 2 \to 0$), and is applied to the one-hot-encoded vectors; its inverse is implemented by a cyclic shift in the opposite direction.
    The discrete KSD values are computed between the trained Potts model $p(x;\boldsymbol{\theta})$ and a fixed set of $2000$ samples drawn from the true data distribution. 
    For KSD computation, We use the same exponential Hamming kernel as in BayesSum.
    
\end{enumerate}

\begin{figure*}[t]
    \centering
    \begin{subfigure}{0.33\textwidth}
        \centering
        \hspace{-10pt}
        \includegraphics[width=1.0\linewidth]{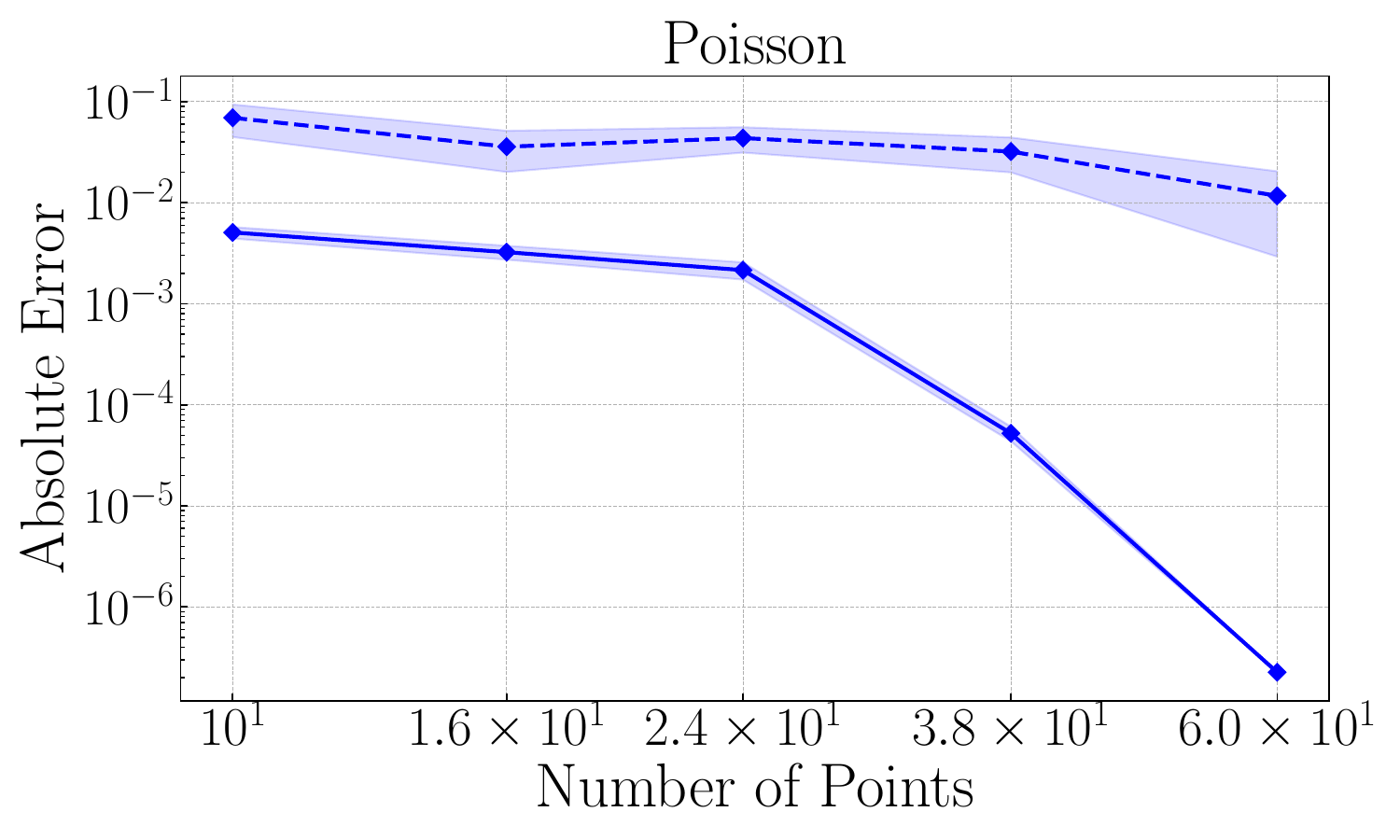}
    \end{subfigure}%
    \hfill 
    \begin{subfigure}{0.33\textwidth}
        \centering
        \hspace{-10pt}
        \includegraphics[width=1.0\linewidth]{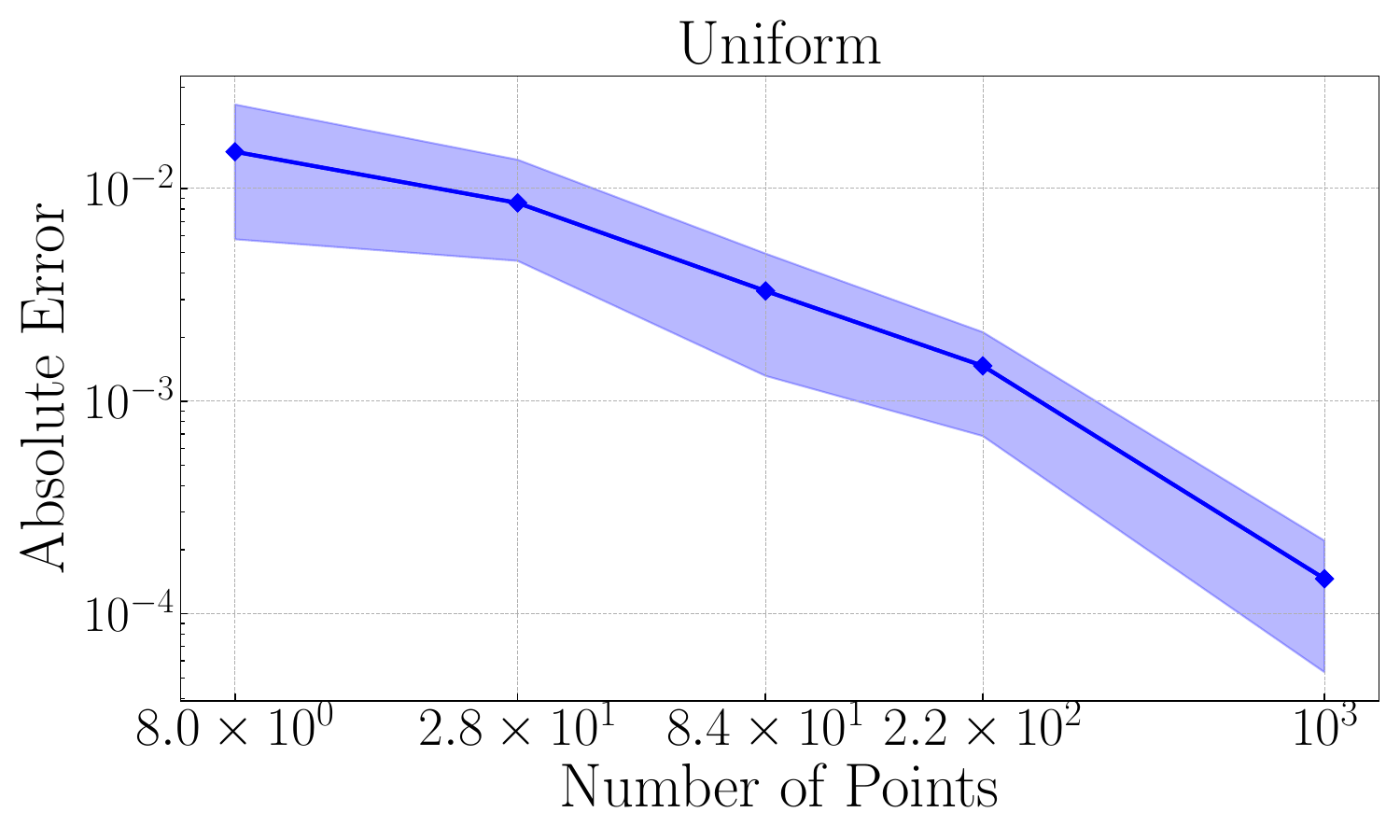}
    \end{subfigure}%
    \hfill 
    \begin{subfigure}{0.33\textwidth}
        \centering
        \hspace{-10pt}
        \includegraphics[width=1.0\linewidth]{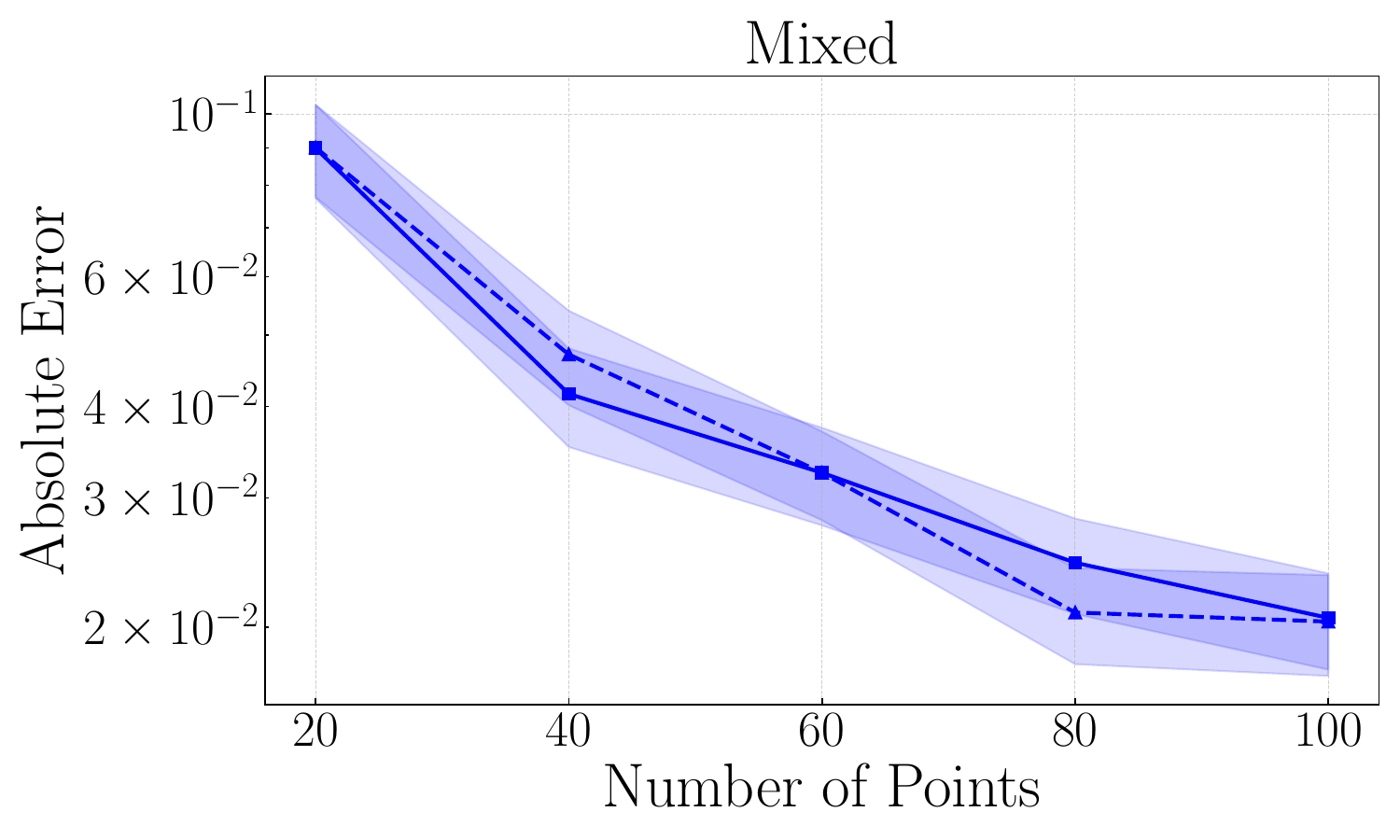}
    \end{subfigure}
    \vspace{-3pt}
    \caption{Comparison of BayesSum with i.i.d samples with replacement (real line) and without replacement (dashed line) in the setting of the Poisson distribution (\textbf{Left}), Uniform distribution (\textbf{Middle}) and the mixed distributions over continuous and discrete domain (\textbf{Right}). }
    \label{fig:iid_repetitive}
\end{figure*}

\end{appendices}

\newpage
\end{document}